\def\a{\alpha}
\def\b{\beta}
\def\d{\delta}
\def\e{\epsilon}
\def\g{\gamma}
\def\o{\omega}
\def\p{\phi}
\def\k{\kappa}
\def\lam{\lambda}
\def\L{\Lambda}
\def\P{\Phi}
\def\bssigma{\boldsymbol{\sigma}}
\def\bw{\mathbf w}
\def\bz{\mathbf z}
\def\bx{\mathbf x}
\def\bv{\mathbf v}
\def\bW{\mathbf W}
\def\bV{\mathbf V}
\def\bB{\mathbf B}
\def\R{\mathbb R}
\def\P{\mathbb P}
\def\E{\mathbb E}
\def\l{\left}
\def\r{\right}
\def\la{\l\langle}
\def\ra{\r\rangle}
\def\ll{\left\lVert}
\def\rl{\right\rVert}
\def\lv{\left\lvert}
\def\rv{\right\rvert}
\def\({\left(}
\def\){\right)}
\def\[{\left[}
\def\]{\right]}
\def\pt{\partial}
\def\nb{\nabla}
\def\cd{\cdot}
\def\ds{\displaystyle}
\def\qd{\quad}
\def\t{\tilde}
\def\h{\hat}
\def\M{M}
\def\hp{\hat{p}}
\def\pinf{p_\infty}
\def\hd{\h{\d}}
\def\a{\alpha}
\def\k{\kappa}
\def\b{\beta}
\def\d{\delta}
\def\e{\epsilon}
\def\g{\gamma}
\def\o{\omega}
\def\p{\phi}
\def\p{\rho}
\def\lam{\lambda}
\def\E{\mathbb E}
\def\R{\mathbb R}
\def\bw{{\bf{w}}}
\def\E{\mathbb{E}}
\def\bW{{\bf{W}}}
\def\hC{\h{C}}
\def\tL{\t{L}}
\def\M{\mathcal{M}}
\def\l{\left}
\def\r{\right}
\def\la{\left\langle}
\def\ra{\right\rangle}
\def\ll{\left\lVert}
\def\rl{\right\rVert}
\def\lv{\left\lvert}
\def\rv{\right\rvert}
\def\({\left(}
\def\){\right)}
\def\[{\left[}
\def\]{\right]}
\def\pt{\partial}
\def\nb{\nabla}
\def\cd{\cdot}
\def\ds{\displaystyle}
\def\qd{\quad}
\def\h{\hat}
\def\t{\tilde}
\def\kk{\h{\mu}}
\def\p{\psi}
\def\st{*}
\def\T{T}
\def\L{F}
\def\at{\frac{\a^2}{2}}
\def\M{\psi_\infty}
\def\o{C_L}
\begin{document}

\title{On Large Batch Training and Sharp  Minima: A Fokker-Planck Perspective
}

\titlerunning{On Large Batch Training and Sharp  Minima}        

\author{Xiaowu Dai         \and
        Yuhua Zhu 
}


\institute{Xiaowu Dai \at
              Department of Economics, University of California, Berkeley, CA, USA \\
              \email{xwdai@berkeley.edu}           
           \and
           Yuhua Zhu \at
             Department of Mathematics, Stanford University, CA, USA
}

\date{Received: date / Accepted: date}

\maketitle

\begin{abstract}
We study the statistical properties of the dynamic trajectory of stochastic gradient descent (SGD). 
We approximate the mini-batch SGD and the momentum SGD as stochastic differential equations (SDEs).
We exploit the continuous formulation of SDE and the theory of Fokker-Planck equations to develop new results on the escaping phenomenon and the relationship with large batch and sharp minima. 
In particular, we find that the stochastic process solution tends to converge to flatter minima regardless of the batch size in the asymptotic regime. However, the convergence rate  is rigorously proven to depend on the batch size. These results are validated empirically with various datasets and models.

\keywords{Large batch training \and Sharp minima \and Fokker-Planck equation \and Stochastic gradient algorithm \and Deep neural network}
 \subclass{  90C15 \and 35Q62  \and 65K05 }
\end{abstract}

\section{Introduction}
\label{sec:intro}

We consider the following empirical risk minimization problem in statistical machine learning:
\begin{equation*}
\min_{\bw\in\R^d}\frac{1}{N}\sum_{n=1}^NL_n(\bw),
\end{equation*}
where $\bw$ represents the model parameters, $L_n(\bw)$ denotes the loss due to the $n^{\text{th}}$ training sample, and $N$ is the size of the training set. 
Since the training set for many application domains such as image (He et al. \cite{he2016}) and speech recognition (Amodei et al. \cite{amodei2016deep}) is of large size, the stochastic gradient descent (SGD) and its variants have become standard approaches of training complex model including deep neural networks (Bottou et al. \cite{bottou2018}). The mini-batch SGD estimates the negative loss gradient based on a
small subset of training examples, which incurs the computational complexity per iteration independent of $N$:
\begin{equation}
\label{eqn:minibatchsgd}
\bw_{k+1}  = \bw_k - \frac{\g_k}{M_k}\sum_{n\in B_k} \nabla L_n(\bw_k),
\end{equation}
where $k\geq 0$,  $\gamma_k$ is the learning rate, and the mini-batch set $B_k$ consists of $M_k$ uniformly selected sample indices from $\{1,2,\ldots,N\}$.
A notable variant of mini-batch SGD is momentum SGD, which is a practical approach of speeding up the training (Nesterov \cite{nesterov2013}). 
For mini-batch SGD and its variant,  we use the term \emph{large batch training} to denote the use of a large mini-batch (Keskar et al. \cite{keskar}).

Recently, several works have discussed the geometry of SGD (Keskar et al. \cite{keskar}; Goyal et al. \cite{goyal2017}; Hoffer et al. \cite{hoffer}). 
Specifically, Keskar et al. \cite{keskar} find, based on empirical experiments, that the large batch training tends to converge to the sharp minima of the training function while the small batch training is more likely to escape the sharp minima. 
In this work, we study theoretically and empirically the dynamic of the convergence and escaping phenomenon relating to the batch size for mini-batch SGD and momentum SGD.

We approximate SGD using continuous stochastic differential equation (SDE) (Chaudhari et al., \cite{Chaudhari2017deep}; Mandt et al. \cite{mandt2017}; Li et al. \cite{li2017}).
 Assuming isotropic gradient noise, we derive new results on the dynamic trajectory of the Fokker-Planck solution. In particular, the derived convergence rate in terms of the batch size provides new insights into the escaping phenomenon for mini-batch SGD and momentum SGD. 
 Our main finding is that the stochastic process solution of SDE tends to converge to flatter minima regardless of the batch size in the asymptotic regime. However, the convergence rate depends on the batch size. 
 Motivated by partial differential equation theory, we define the sharpness in terms of the determinant of the Hessian, which provides a new perspective into the ongoing discussion on the definition of the sharpness  (e.g., Dinh et al. \cite{Dinh}). 
 We verify our theoretical results experimentally on different datasets and deep neural network models. 
 The proposed statistical view using tools from the Fokker-Planck equation can be used to analyze other stochastic algorithms for complex models.

The rest of the paper is organized as follows. We introduce the background in Section \ref{sec:minnibatchsgd}. We present our main result for mini-batch SGD in Section \ref{sec:mainresults}. 
We extend the result to momentum SGD in Section \ref{sec: model}.
We show numerical experiments in Section \ref{sec:simulation}. Related works are provided in Section \ref{sec:relatedwork}. We conclude the paper with discussions in Section \ref{sec:discussion}. Proofs are given in the Appendix.



\section{SDE Modeling for Large Batch Training}
\label{sec:minnibatchsgd}

The mini-batch SGD carries out the update at each step following (\ref{eqn:minibatchsgd}), which can be rewritten as
\begin{equation}
\label{minsgdre}
\bw_{k+1}  - \bw_k  = - \gamma_k\nabla L(\bw_k) + \frac{\gamma_k}{\sqrt{M_k}}\cdot\boldsymbol{\epsilon}_k,
\end{equation}
where $L(\bw) \equiv \E[L_n(\bw)]$ is the risk function and $\boldsymbol{\epsilon}_k = \frac{1}{\sqrt{M_k}}\sum_{n\in B_k}(\nabla L(\bw_k)-\nabla L_n(\bw_k))$ is a $d$-dimensional random vector.
Assume that the covariance matrix $\text{Var}[\nabla L_n(\bw)]\equiv\bssigma^2(\bw)$  is positive definite, which holds for typical loss functions including the squared loss.
By the dominated convergence theorem, $\boldsymbol{\epsilon}_k$ has mean $0$ and covariance $\bssigma^2(\bw_k)$  for any $k\geq 0$ (see, Appendix \ref{sec:meanvarep}).

For the large batch training, the distribution of $\boldsymbol{\epsilon}_k$ is well approximated by the normal distribution from the central limit theorem.
Consider the following stochastic differential equation (SDE) model:
\begin{equation}
\label{eqn:sde}
d\bW(t) = -\nabla L(\bW(t))dt - \sqrt{\frac{\gamma(t)}{M(t)}}\bssigma(\bW(t))d\bB(t), \ \ \bW(0) = \bw_0,
\end{equation}
where the Brownian motion $\bB(t)$ accounts for random fluctuations due to the use of mini-batches for gradient estimation in (\ref{minsgdre}). 
The Euler discretization of SDE (\ref{eqn:sde}) resembles the mini-batch SGD (\ref{minsgdre}), and the SDE solution approximates the mini-batch SGD in the weak sense (i.e., in distribution) under the finite-time setting $t\in[0,T]$ for any $T>0$; see, e.g., Li et al. \cite{li2017} and Mandt et al. \cite{mandt2017} .

\subsection{Escaping Phenomenon}
\label{sec:escapingpho}

Recently, Keskar et al. \cite{keskar}  note the escaping phenomenon of mini-batch SGD in training neural networks. Namely, the large batch training tends to converge to the sharp minima of the training function while
the small batch training is more likely to \emph{escape} the sharp minima. 
 A conceptual sketch of ``sharp" (and relatively, ``flat") minima are shown in Figure \ref{fig:conceptualflatsharp}, where  a mathematical definition of the sharpness is given in Section \ref{sec:finitetime}.
 Based on the numerical experiments, Keskar et al. \cite{keskar}  also find that a sharp minimum is correlated with a worse generalization, which, however, will not be studied in the current paper. 

\begin{figure}
    \centering
    \includegraphics[width=\textwidth]{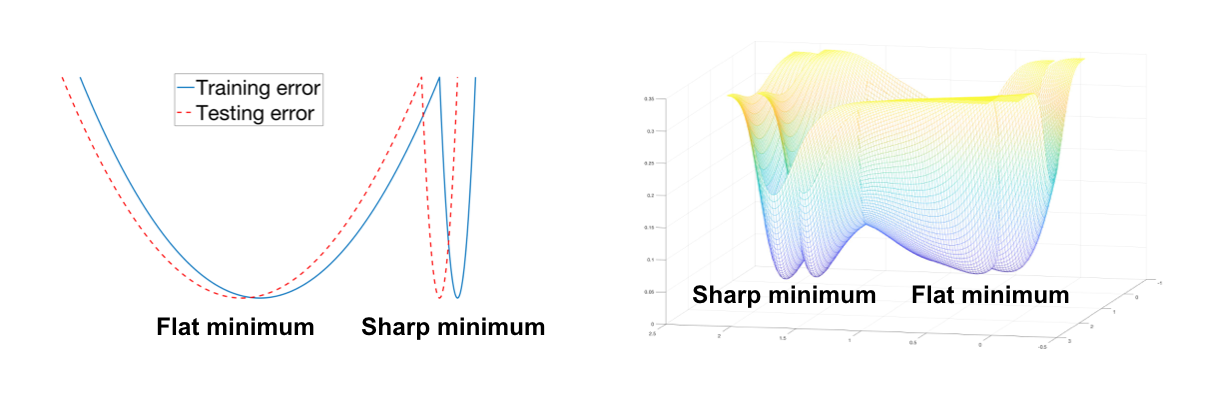}
    \caption{A  sketch of ``flat" and ``sharp" minima for one-dimensional case (left plot) and two-dimensional case (right plot). The vertical axis indicates the value of the loss function.}
    \label{fig:conceptualflatsharp}
\end{figure}

The escaping phenomenon is important for understanding the algorithm design for complex statistics and machine learning models. The phenomenon  has been validated in extensive numerical results; see, e.g., \cite{goyal2017} \cite{hoffer}. 
However, the theoretical support for the phenomenon is limited in the literature. The current paper fills some gaps in this important direction. Our approach is to use the SDE model (\ref{eqn:sde}) and study the escaping phenomenon for the stochastic process solution to the SDE model. 

\subsection{Fokker-Planck Equation}
\label{sec:fokkerplanck}
We allow the learning rate $\gamma_k$ and the batch size $M_k$ in (\ref{minsgdre}) to be varied along the step $k$, which is  consistent with the practice. 
As a result, the functions $\gamma(t)$ and $M(t)$ in (\ref{eqn:sde}) are  time-dependent.
Consider the isotropic gradient covariance: 
\begin{equation}
\label{eqn:isotconv}
\bssigma^2(\bw)= \beta(\bw)\cdot\mathbf{I},
\end{equation} 
where the scalar function $\beta(\bw)$ depends on $\bw$. Similar assumptions as (\ref{eqn:isotconv}) have been made in the stochastic algorithm literature, for example,  \cite{Chaudhari2017deep} \cite{Jastrzebski2017},  where $\beta(\bw) \equiv \beta$ is restricted to a constant. Since our interest lies in the escaping phenomenon and the relationship with the scale of  variance, the learning rate, and the batch size, we make the isotropic assumption (\ref{eqn:isotconv}) for simplicity and leave the anisotropic case for future study.
 
Denote by $p(\bw,t)$ the probability density function of the stochastic process solution $\bW(t)$.
We can characterize $p(\bw,t)$ in the following lemma, which is from the partial differential equations literature (e.g., Kolpas et al. \cite{kolpas2007coarse}).
\begin{lemma}
\label{lem:fokkerplank}
The  probability density function $p(\bw,t)$ satisfies the following  Fokker-Planck equation:
\begin{equation}
\label{eqn:probdensitypthetatw}
\begin{aligned}
 \partial_tp(\bw,t) = \nabla\cdot\left(\left[\nabla \left(L(\bw)+ \frac{\gamma(t)\beta(\bw)}{2M(t)}\right)\right]p(\bw,t) +  \frac{\gamma(t)\beta(\bw)}{2M(t)}\nabla p(\bw,t)\right),
\end{aligned}
\end{equation}
where $p(\bw,0)=\delta(\bw_0)$, and  $\delta(\cdot)$ denotes the Dirac's delta function.
\end{lemma}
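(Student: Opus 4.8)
The plan is to recognize \eqref{eqn:probdensitypthetatw} as the forward Kolmogorov (Fokker--Planck) equation attached to the It\^o diffusion \eqref{eqn:sde}, and to derive it by the standard duality argument: test against a smooth compactly supported function, apply It\^o's formula, take expectations, and integrate by parts. \emph{First}, I would read off the coefficients of \eqref{eqn:sde}. The drift is $-\nabla L(\bw)$, and by the isotropy assumption \eqref{eqn:isotconv} the instantaneous covariance rate of $\bW(t)$ is $\frac{\gamma(t)}{M(t)}\bssigma^2(\bW) = \frac{\gamma(t)\beta(\bW)}{M(t)}\mathbf{I}$; equivalently the quadratic covariations satisfy $d\langle W_i,W_j\rangle = \frac{\gamma(t)\beta(\bW)}{M(t)}\delta_{ij}\,dt$. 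It is convenient to abbreviate the scalar diffusion coefficient by $D(\bw,t) := \frac{\gamma(t)\beta(\bw)}{2M(t)}$, so that the goal becomes to show $\partial_t p = \nabla\cdot(p\,\nabla L) + \Delta(Dp)$ and then to regroup terms.

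\emph{Second}, I would fix an arbitrary test function $\phi\in C_c^\infty(\R^d)$ and apply It\^o's formula to $\phi(\bW(t))$, obtaining $d\phi(\bW) = \nabla\phi\cdot d\bW + \tfrac12\sum_{i,j}\partial_{w_i}\partial_{w_j}\phi\,d\langle W_i,W_j\rangle$. Substituting the drift and the covariations above and taking expectations annihilates the It\^o (martingale) integral, leaving $\frac{d}{dt}\E[\phi(\bW(t))] = \E\big[-\nabla\phi\cdot\nabla L + D\,\Delta\phi\big]$. Since $\E[\phi(\bW(t))] = \int_{\R^d}\phi(\bw)\,p(\bw,t)\,d\bw$ by the definition of $p$, this reads
\begin{equation*}
\int_{\R^d}\phi\,\partial_t p\,d\bw = \int_{\R^d}\big(-\nabla\phi\cdot\nabla L + D\,\Delta\phi\big)\,p\,d\bw.
\end{equation*}

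\emph{Third}, I would integrate by parts in $\bw$; as $\phi$ has compact support, all boundary terms vanish. The first term gives $-\int\nabla\phi\cdot(p\,\nabla L)\,d\bw = \int\phi\,\nabla\cdot(p\,\nabla L)\,d\bw$, and the second (integrating by parts twice) gives $\int D\,\Delta\phi\,p\,d\bw = \int (Dp)\,\Delta\phi\,d\bw = \int\phi\,\Delta(Dp)\,d\bw$. Because $\phi$ is arbitrary, $\partial_t p = \nabla\cdot(p\,\nabla L) + \Delta(Dp)$. \emph{Fourth}, expanding $\Delta(Dp) = \nabla\cdot\nabla(Dp) = \nabla\cdot\big(p\,\nabla D + D\,\nabla p\big)$ and combining with the first term yields $\partial_t p = \nabla\cdot\big(p\,\nabla(L+D) + D\,\nabla p\big)$, which is exactly \eqref{eqn:probdensitypthetatw} once $D$ and $\nabla D = \frac{\gamma(t)}{2M(t)}\nabla\beta(\bw)$ are written back in terms of $\gamma,\beta,M$. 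The initial condition is immediate: since $\bW(0)=\bw_0$ is deterministic, the law of $\bW(0)$ is the point mass $\delta(\bw_0)$.

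The main subtlety, which I would flag carefully, concerns the \emph{multiplicative} (state-dependent) noise coefficient $\beta(\bw)$. Two points are essential. First, the SDE \eqref{eqn:sde} must be interpreted in the It\^o sense, consistent with its origin as the weak-sense Euler limit of the update \eqref{minsgdre}; the Stratonovich reading would add an It\^o-correction drift and change the equation, so invoking It\^o's formula (rather than the Stratonovich chain rule) is the correct and decisive choice. Second, when splitting $\Delta(Dp)$ one must retain the $\nabla D$ term, and it is precisely this term that supplies the extra ``noise-induced'' drift $\nabla\big(\frac{\gamma(t)\beta(\bw)}{2M(t)}\big)$ inside the divergence in \eqref{eqn:probdensitypthetatw}; treating $\beta$ as if it were constant in $\bw$ would silently drop it. The remaining technical points---existence and sufficient regularity of the density $p$, and the passage from the weak identity to the pointwise PDE---are standard under the positive-definiteness of $\bssigma^2$ (hence $\beta>0$, giving uniform ellipticity) together with smoothness of $L$ and $\beta$, which is the regime in which the cited PDE theory (Kolpas et al. \cite{kolpas2007coarse}) applies.
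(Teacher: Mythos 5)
Your proof is correct, and it takes a genuinely different route from the paper's. You derive the Fokker--Planck equation by the duality (adjoint) argument: apply It\^o's formula to a test function $\phi\in C_c^\infty(\R^d)$, take expectations to kill the martingale part, and integrate by parts to identify $\partial_t p = \nabla\cdot(p\,\nabla L) + \Delta(Dp)$ with $D = \gamma(t)\beta(\bw)/2M(t)$, after which the regrouping $\Delta(Dp) = \nabla\cdot\left(p\,\nabla D + D\,\nabla p\right)$ produces the noise-induced drift. The paper instead follows Kolpas et al.: it works at the level of Markov transition densities, uses the Chapman--Kolmogorov identity and a Kramers--Moyal expansion (Taylor-expanding the test function and defining jump-moment coefficients $D^{(n)}$), truncates at second order, does this first for constant $\beta$, and only then extends to $\bw$-dependent $\beta$ via the same product-rule regrouping you use. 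Your approach buys several things: it treats state-dependent $\beta(\bw)$ in a single pass rather than as an afterthought; it derives the drift and diffusion coefficients directly from the SDE rather than postulating the Kramers--Moyal coefficients (the paper simply asserts $D^{(1)}$, $D^{(2)}$, and $D^{(n)}=0$ for $n\geq 3$ without computing them from the transition kernel); and it makes explicit that the It\^o (not Stratonovich) reading of \eqref{eqn:sde} is what justifies the form of the equation --- a point the paper leaves implicit. What the paper's route buys in exchange is that it does not presuppose It\^o calculus and stays closer to the discrete Markov-chain origin of the algorithm, which is the spirit of the reference it cites. Both arguments share the same final regrouping step and the same treatment of the initial condition.
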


We give a proof in Appendix \ref{sec:proofoflemfokkerplank}. 
Note that the drift term in (\ref{eqn:probdensitypthetatw}) $\nabla [L(\bw)+ \gamma(t)\beta(\bw)/2M(t)] \neq \nabla L(\bw)$, which implies that
the stochastic process solution $\bW(t)$ does not follow the mean drift direction $-\nabla L(\bw)$ as its update direction. A smaller batch size $M(t)$ corresponds to a drift term deviates further from the mean drift direction. 

\subsection{Kramer's Formula}
\label{sec:finitetime}

\begin{figure}
    \centering
    \includegraphics[width=0.5\textwidth]{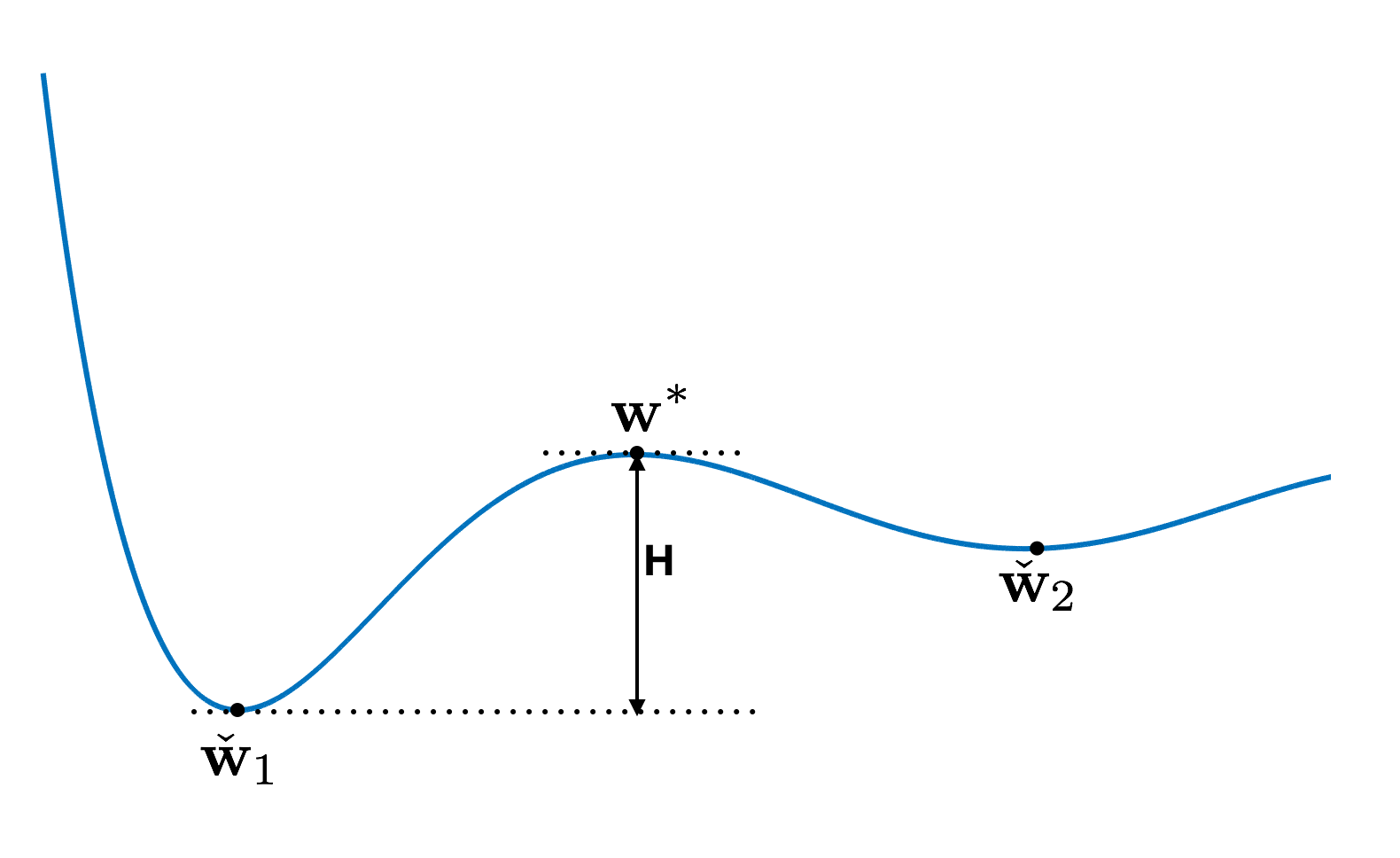}
    \caption{A  sketch of two local minimizer $\check{\bw}_1$ and $\check{\bw}_2$ of a risk function. The $\bw^*$ is the saddle point between $\check{\bw}_1$ and $\check{\bw}_2$.}
    \label{fig:finite_time_pic}
\end{figure}

Based on Fokker-Planck equation in Lemma \ref{lem:fokkerplank}, we are able to characterize the dynamics of the stochastic process solution in the finite-time regime. In particular, we have the escaping time of the stochastic process solution  from one local minimizer, denoted by $\check{\bw}_1$, to its   nearest local minimizer, denoted by $\check{\bw}_2$. Figure \ref{fig:finite_time_pic} gives an illustration, where $\bw^*$ is the saddle point between $\check{\bw}_1$ and $\check{\bw}_2$.  
There are possibly multiple saddle points between $\check{\bw}_1$ and $\check{\bw}_2$ in the multi-dimensional setting, where the  $\bw^*$ should be defined as the saddle point with the minimal height among all saddle points in the following sense. Denote by $\bw(t),0\leq t\leq 1,$ be any continuous path from $\check{\bw}_1$ to $\check{\bw}_2$, and $\widehat{\bw} = \arg{\text{inf}}_{\bw:\bw(0)=\check{\bw}_1, \bw(1) = \check{\bw}_2}\sup_{t\in[0,1]}L(\bw(t))$ the path with the minimal saddle point height among all continuous path. Then, $\bw^* \equiv \max_{t\in[0,1]}\widehat{\bw}(t)$.
It is known that the Hessian $\nb^2 L(\bw^*)$ has a single negative eigenvalue (e.g., Berglund \cite{berglund2013kramers}). 
Let $-\lam^*$ be the negative eigenvalue of $\nb^2 L(\bw^*)$  and $H(\bw^*,\check{\bw}_1) \equiv L(\bw^*) - L(\check{\bw}_1)$ be the relative height of $\bw^*$ to $\check{\bw}_1$. 
We have the following lemma characterizing the escaping time of the stochastic process solution from $\check{\bw}_1$ to $\check{\bw}_2$.

\begin{lemma}
\label{thm: exitingtime}
Let $\tau_{\check{\bw}_1 \to \check{\bw}_2}$ be the transition time for $\bW(t)$ from a closed ball of radius $\e>0$ centered at $\check{\bw}_1$ to a closed ball of radius $\e>0$ centered at $\check{\bw}_2$. Then
\begin{equation*}
\mathbb{E}[\tau_{\check{\bw}_1 \to \check{\bw}_2}] = \frac{2\pi}{\lam^*} \sqrt{\frac{|\Delta L(\bw^*)|}{|\Delta L(\check{\bf{w}}_1)|}}\exp\left(\frac{H(\bw^*,\check{\bw}_1) \cdot 2M(\check{\bw}_1)}{\gamma(\check{\bw}_1)\b(\check{\bw}_1)}\right) \left[1+O\(\sqrt{\e}\log\left(\frac{1}{\e}\right)\)\right], \label{eq: extingtime}
\end{equation*}
where $|\nb^2L(\cdot)|$ denotes the determinant of $\nb^2L(\cdot)$, $M(\check{\bw}_1)$ is the batch size at $\check{\bw}_1$, $\gamma(\check{\bw}_1)$ is the learning rate  at $\check{\bw}_1$, and $\beta(\cdot)$ is defined in (\ref{eqn:isotconv}).
\end{lemma}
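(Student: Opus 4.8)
The plan is to identify the statement as a form of the Eyring--Kramers (Kramers) escape-time formula and to adapt its classical derivation, as surveyed in Berglund \cite{berglund2013kramers}, to the effective diffusion coefficient produced by the SGD-based SDE (\ref{eqn:sde}).

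First I would write down the generator of $\bW(t)$. Taking $\bssigma=\sqrt{\beta}\,\mathbf I$ in (\ref{eqn:sde}), the process solves $d\bW = -\nb L\,dt + \sqrt{2D}\,d\bB$ with effective diffusion coefficient (temperature) $D(\bw,t) = \gamma(t)\beta(\bw)/(2M(t))$, so its generator is $\mL = -\nb L\cdot\nb + D\,\Delta$ (with $\Delta$ the Laplacian); one checks that the forward adjoint $\mL^*$ reproduces (\ref{eqn:probdensitypthetatw}). Since the transition is dominated by the neighborhoods of the starting well $\check{\bw}_1$ and of the barrier $\bw^*$, and $D$ is small in the small-temperature (large-batch) regime, I would freeze the slowly varying coefficients and work with the constant temperature $D_1 = \gamma(\check{\bw}_1)\beta(\check{\bw}_1)/(2M(\check{\bw}_1))$; this is exact when $\gamma,M,\beta$ are constant and otherwise contributes an error to be tracked. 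With $D$ frozen, $\mL$ is reversible with respect to the Gibbs measure $\propto e^{-L/D_1}$.

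Next I would reduce $\E[\tau_{\check{\bw}_1\to\check{\bw}_2}]$ to a mean first-passage-time problem: the function $u(\bw)=\E[\tau\mid\bW(0)=\bw]$ solves the backward equation $\mL u = -1$ with $u=0$ on the target ball of radius $\e$ around $\check{\bw}_2$. By the standard potential-theoretic representation of metastable hitting times, the solution is controlled by the capacity between the two balls and by the quasi-stationary mass $\int e^{-L/D_1}\,d\bw$ concentrated in the well around $\check{\bw}_1$, with the transition proceeding through the lowest saddle $\bw^*$ of $L$. The heart of the computation is then a multidimensional Laplace (saddle-point) asymptotic. Expanding $L$ to second order at $\check{\bw}_1$ yields a Gaussian integral contributing the well partition function $\propto(2\pi D_1)^{d/2}/\sqrt{|\nb^2 L(\check{\bw}_1)|}$; the expansion at $\bw^*$ over its $d-1$ stable directions contributes $\propto(2\pi D_1)^{(d-1)/2}/\sqrt{\prod_i\mu_i}$, where $\prod_i\mu_i = |\nb^2 L(\bw^*)|/\lam^*$ is the product of the positive eigenvalues, while the single unstable direction (eigenvalue $-\lam^*$) supplies the crossing-rate factor $\lam^*/(2\pi)$ through the flux across the dividing surface. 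Forming the ratio, the powers of $2\pi D_1$ cancel and one is left with the prefactor $\frac{2\pi}{\lam^*}\sqrt{|\nb^2 L(\bw^*)|/|\nb^2 L(\check{\bw}_1)|}$, multiplied by the Arrhenius factor $\exp\!\big((L(\bw^*)-L(\check{\bw}_1))/D_1\big) = \exp\!\big(H(\bw^*,\check{\bw}_1)\cdot 2M(\check{\bw}_1)/(\gamma(\check{\bw}_1)\beta(\check{\bw}_1))\big)$, which is the stated expression.

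I expect the principal obstacle to be the rigorous treatment of the saddle $\bw^*$: the integral along the unstable direction is not absolutely convergent, so the naive Laplace method cannot be applied there, and one must instead use the capacity/Dirichlet-form formulation (or matched asymptotics through the saddle) to extract the $1/\lam^*$ prefactor and to prove that the single lowest saddle dominates the transition. A secondary difficulty is the error accounting that produces the factor $1+O(\sqrt{\e}\log(1/\e))$: one must show that replacing the balls of radius $\e$ by the exact points $\check{\bw}_1,\check{\bw}_2$, discarding the higher-order terms in the Gaussian expansions, and freezing the spatially and temporally varying $\gamma,M,\beta$ at $\check{\bw}_1$ each contribute only at the claimed order.
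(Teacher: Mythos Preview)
Your sketch is correct and follows the standard potential-theoretic derivation of the Eyring--Kramers formula (capacity/Dirichlet-form representation plus Laplace asymptotics at the well and the saddle), which is precisely the content of the references the paper invokes. Note, however, that the paper does \emph{not} actually prove this lemma: it simply states that ``this lemma is known in the diffusion process literature as the Eyring--Kramers formula'' and cites Berglund \cite{berglund2013kramers} and Bovier et al.\ \cite{bovier2004metastability,bovier2005metastability}, so there is no in-paper argument to compare against---your proposal is already more detailed than what the paper provides.
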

Similarly, we have the transition time from $\check{\bw}_2$ to $\check{\bw}_1$ (i.e., $\tau_{\check{\bw}_2 \to \check{\bw}_1}$) with the only difference that the right side of the equation in Lemma \ref{thm: exitingtime} should be replaced by the geometry related to $\check{\bw}_2$.
This lemma is known in the diffusion process literature as the Eyring-Kramers formula; see, e.g.,  \cite{berglund2013kramers}, \cite{bovier2004metastability}, \cite{bovier2005metastability}.  Our observation is that the Eyring-Kramers formula can provide a quantitative description of  the escaping phenomenon in the \emph{finite-time} regime.
 In particular,  the time that  $\bW(t)$ escapes from one local minimum to its nearest local minimum
 depends on three factors. Namely, the diffusion factor $\gamma(\bw)\beta(\bw)/M(\bw)$, the potential barrier $H(\bw^*,\check{\bw}_1)$ that   $\bW(t)$ has to climb to escape $\check{\bw}_1$, and the determinants of the Hessians of the risk function at $\check{\bw}_1$ and $\bw^*$. This fact suggests the following definition of the sharpness. 
 \begin{definition}[Sharpness]
 \label{def:sharpness}
The sharpness of a minimizer is defined as the determinant of the Hessian of the risk function at the minimizer, i.e., $|\nb^2L(\cdot)|$. A larger $|\nb^2L(\cdot)|$ corresponds to a sharper minimizer.
 \end{definition} 

Lemma \ref{thm: exitingtime} shows that a larger batch size $M(\check{\bw}_1)$ at a local minimizer $\check{\bw}_1$ results in a longer time to escape from $\check{\bw}_1$. 
Hence, if $\check{\bw}_1$ corresponds to a sharp minimum with a large $|\Delta L(\check{\bf{w}}_1)|$, the exponential term 
\begin{equation}
\label{eqn:exph2mhatw}
\exp\left(\frac{H(\bw^*,\check{\bw}_1) \cdot 2M(\check{\bw}_1)}{\gamma(\check{\bw}_1)\b(\check{\bw}_1)}\right)
\end{equation} 
dominates the escaping time. 
Since a local minimizer of the training function lies in a closed ball of a local minimizer of the risk function, the stochastic process $\bW(t)$ of large batch training will be trapped at a sharp minimizer in the finite-time regime. This result shows  that \emph{large batch training tends to converge to sharp minima of the training function}.
On the other hand, if the batch size $M(\check{\bw}_1)$ decreases, the exponential term (\ref{eqn:exph2mhatw}) decreases, and the stochastic process  solution $\bW(t)$ will be  trapped at $\check{\bw}_1$ only when the determinant $|\nb^2L(\check{\bw}_1)|$ is small enough, as shown in Lemma \ref{thm: exitingtime}. In words, it explains the escaping phenomenon that \emph{small batch training tends to escape sharp minima and converge to flat minima}. 

The escaping phenomenon in the asymptotic regime is different from that in the finite-time regime. However,  the Eyring-Kramers formula fails when $t\to\infty$. We develop a new theory for the asymptotic regime in the following Section \ref{sec:mainresults}, and extend the result for momentum SGD-related SDE in Section \ref{sec: model}.

\section{Convergence Properties for Large Batch Training}
\label{sec:mainresults}
We study the  stochastic process solution $\bW(t)$ of the SDE  (\ref{eqn:sde}) in the asymptotic regime (i.e., $t\to\infty$). 
\subsection{Main Assumptions}
The main assumptions are outlined as follows.
\begin{itemize}
\item[](A.1) The risk function $L(\bw)$ is  confinement in the sense that 
\begin{equation*}
\lim_{\|\bw\|\to +\infty}L(\bw) = +\infty,\quad \int e^{-L(\bw)}d\bw < +\infty.
\end{equation*}
\item[] (A.2)  Denote by $\text{Tr}(\nb^2L)$ the trace of the Hessian of $L$. Assume 
\begin{equation*}
\begin{aligned}
& \lim_{\|\bw\|\to +\infty}  \left\{\frac{1}{2}\|\nb L(\bw)\|^2 - \text{Tr}(\nb^2L(\bw))\right\} = + \infty, \\
& \lim_{\|\bw\|\to +\infty}  \left\{\text{Tr}(\nb^2L(\bw))/\|\nb L(\bw)\|^2\right\} = 0.
\end{aligned}
\end{equation*} 
\item[](A.3)  There exists a constant $M_\bw$, such that 
\begin{equation*}
\lv e^{- L(\bw)}\(\ll \nb L(\bw) \rl^2 - \text{Tr}(\nb^2L(\bw))\) \rv \leq M_\bw.
\end{equation*}
\end{itemize}
Assumptions (A.1)--(A.3) is common in the diffusion process literature, see, e.g., Pavliotis \cite{pavliotis2014stochastic}.
We show in Appendix \ref{sec:disonassump} that (A.1)--(A.3)  hold for typical loss functions, including the regularized mean cross entropy and the squared loss.
In particular, Assumption (A.1) ensures that the  Gibbs  density function $  e^{-L(\bw)}$ is well defined. Assumption (A.2) guarantees the measure $\mu(\bw) =   \int e^{-L(\bw)}d\bw$ satisfying the Poincar\'e inequality (see, \cite{pavliotis2014stochastic}):
\begin{equation}
\label{eqn:poincare}
\begin{aligned}
  \int \ll \nb f(\bw) \rl^2d\mu(\bw)  \geq C_P\int \(f(\bw) - \int f(\bw) d\mu(\bw)\)^2 d\mu(\bw)
  \end{aligned}
\end{equation}
with some $C_P>0$, where $f$ is any integrable function satisfying $\int f^2(\bw)d\bw<\infty$.
\begin{lemma}
\label{lem:stationbeta}
Under Assumption (A.1) and  $\beta(\bw)\equiv\beta$, the Fokker-Planck equation  (\ref{eqn:probdensitypthetatw}) has a stationary solution  in the asymptotic regime (i.e., $t\to\infty$):
\begin{equation*}
\label{eqn:defofpinf}
p_\infty(\bw) = \kappa e^{- \frac{2M(\infty)L(\bw)}{\gamma(\infty)\b}}, 
\end{equation*}
where  $\kappa$ is a normalization constant such that $\int p_\infty(\bw)d\bw=1$, and the limiting batch size and learning rate are defined as $M(\infty)\equiv\lim_{t\to\infty}M(t)$ and  $\gamma(\infty)\equiv\lim_{t\to\infty}\gamma(t)$, respectively.
\end{lemma}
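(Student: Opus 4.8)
The plan is to exhibit the Gibbs density directly and verify that it annihilates the limiting Fokker-Planck operator; the key simplification is that a spatially constant $\beta$ removes the $\beta$-correction to the drift, leaving an autonomous equation whose equilibrium (zero-flux) state is of Gibbs form. First I would substitute $\beta(\bw)\equiv\beta$ into the Fokker-Planck equation (\ref{eqn:probdensitypthetatw}). Because $\gamma(t)\beta/(2M(t))$ is then independent of $\bw$, its gradient vanishes and the drift reduces from $\nabla[L(\bw)+\gamma(t)\beta/(2M(t))]$ to $\nabla L(\bw)$. Passing to the asymptotic regime $t\to\infty$ and using $\gamma(t)\to\gamma(\infty)$, $M(t)\to M(\infty)$, the equation becomes autonomous with constant diffusion coefficient $D\equiv\gamma(\infty)\beta/(2M(\infty))$:
\begin{equation*}
\partial_t p = \nabla\cdot\big(\nabla L(\bw)\,p + D\,\nabla p\big).
\end{equation*}

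Next, a stationary solution satisfies $\partial_t p_\infty=0$, i.e.\ the probability flux $\mathbf{J}=\nabla L(\bw)\,p_\infty + D\,\nabla p_\infty$ is divergence-free. Rather than solving the full divergence equation, I would seek the equilibrium state with $\mathbf{J}\equiv 0$ pointwise. Inserting the ansatz $p_\infty(\bw)=\kappa\,e^{-L(\bw)/D}$ gives $\nabla p_\infty=-D^{-1}\nabla L(\bw)\,p_\infty$, whence $\mathbf{J}=\nabla L(\bw)\,p_\infty + D\big(-D^{-1}\nabla L(\bw)\,p_\infty\big)=0$, so $p_\infty$ is indeed stationary. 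Writing $D$ out recovers exactly the claimed form $p_\infty(\bw)=\kappa\,e^{-2M(\infty)L(\bw)/(\gamma(\infty)\beta)}$. To close, I would invoke Assumption (A.1): confinement provides $L(\bw)\to+\infty$ together with integrability of the Gibbs weight, so that the normalizing constant $\kappa$ can be chosen finite and $p_\infty$ is a genuine probability density.

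The zero-flux verification is routine; the point deserving care is the integrability of the steady state for the \emph{specific} positive exponent $c\equiv 2M(\infty)/(\gamma(\infty)\beta)$, since $\int e^{-L}\,d\bw<\infty$ does not by itself transfer to $\int e^{-cL}\,d\bw<\infty$ for every $c>0$. I would settle this by appealing to the stronger growth of $L$ enjoyed by the loss functions of interest that are shown in the appendix to satisfy (A.1) — for instance $L(\bw)$ growing at least quadratically in $\|\bw\|$ — under which $e^{-cL}$ is integrable for all $c>0$ and hence $\kappa<\infty$. A secondary subtlety is the meaning of ``stationary in the asymptotic regime'' when the coefficients are time-dependent: I would make precise that the statement concerns the limiting autonomous equation obtained as $\gamma(t),M(t)$ stabilize, and defer the uniqueness of, and convergence to, this steady state — which rely on the Poincar\'e inequality of (A.2) — to the subsequent results rather than to this existence lemma.
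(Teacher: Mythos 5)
Your proposal is correct and follows essentially the same route as the paper: both verify that the Gibbs ansatz $p_\infty(\bw)=\kappa e^{-2M(\infty)L(\bw)/(\gamma(\infty)\beta)}$ makes the stationary flux vanish (the paper checks the divergence is zero, you check the stronger pointwise zero-flux condition, which is the same computation), invoking Assumption (A.1) for normalizability. Your added caveat about the integrability of $e^{-cL}$ for $c=2M(\infty)/(\gamma(\infty)\beta)\neq 1$ is a legitimate refinement that the paper's one-line appeal to (A.1) glosses over, but it does not change the structure of the argument.
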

A derivation of Lemma \ref{lem:stationbeta} is provided in Appendix \ref{sec:proflemstationbeta}.
We remark that for a general $\beta(\bw)$ which depends on $\bw$, the existence and an explicit form of  stationary solution to  the Fokker-Planck equation (\ref{eqn:probdensitypthetatw}) remains an open question.
We focus on  $\beta(\bw)\equiv\beta$ in this section.


\subsection{Escaping Phenomenon in the Asymptotic Regime}

Related works on  the analysis of stochastic algorithms have studied the stationary solution $p_\infty(\bw)$; see, e.g., Jastrzebski et al. \cite{Jastrzebski2017}. 
However, it is unclear whether the density function $p(\bw,t)$ converges to the stationary solution $p_{\infty}(\bw)$, not to mention the convergence rate.
Theorem \ref{thm:sigm4.1proof} gives an affirmative answer to this problem, and it also provides new insights into the escaping phenomenon and the relationship with large batch and sharp minima.
\begin{theorem}
\label{thm:sigm4.1proof}
Under Assumptions (A.1)--(A.3), the density function $p(\bw,t)$  of $\bW(t)$    converges to the stationary solution $\pinf(\bw)$.
Moreover, there exists $T>0$ such that for any $t>T$,  
\begin{equation*}
 \ll \frac{p(\bw,t) - p_\infty(\bw)}{\sqrt{p_\infty(\bw)}} \rl_{L^2(\R^d)}^2 \leq C(t,T)e^{-\frac{C_P\cdot (t-T)\cdot \gamma(\infty)\beta}{2M(\infty)}},
\end{equation*}
where the constant $C_P$ is  defined in (\ref{eqn:poincare}), and the function $C(t,T)$ is given by
\begin{equation*}
C(t,T) \equiv \frac{C_P\cdot (t-T)\cdot\gamma(\infty)\beta}{2M(\infty)} +  \ll\frac{{p(\bw, T) - p_\infty(\bw)}}{\sqrt{p_\infty(\bw)}} \rl_{L^2(\R^d)}^2.
\end{equation*} 
\end{theorem}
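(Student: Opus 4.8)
The plan is to prove convergence by the classical energy--dissipation method for the Fokker--Planck equation, using the Poincar\'e inequality supplied by Assumption (A.2). Throughout write $D(t) := \gamma(t)\beta/(2M(t))$, so that $D(t)\to D_\infty := \gamma(\infty)\beta/(2M(\infty))$, and recall from Lemma~\ref{lem:stationbeta} that $p_\infty\propto e^{-L/D_\infty}$ is the stationary state associated with the \emph{limiting} coefficient $D_\infty$. The quantity to control is the weighted $L^2$ distance (a $\chi^2$-type divergence)
\begin{equation*}
E(t) := \left\| \frac{p(\bw,t)-p_\infty(\bw)}{\sqrt{p_\infty(\bw)}} \right\|_{L^2(\R^d)}^2 = \int \frac{(p-p_\infty)^2}{p_\infty}\, d\bw = \int h^2\, p_\infty\, d\bw, \qquad h := \frac{p}{p_\infty}-1 .
\end{equation*}
Conservation of mass gives $\int h\, p_\infty\, d\bw = \int (p-p_\infty)\,d\bw = 0$, so $h$ has zero mean under $p_\infty$ and $E(t)$ is exactly its variance; in particular $E(t)\to 0$ is equivalent to the claimed convergence $p(\cdot,t)\to p_\infty$.

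Next I would differentiate $E$ in time. Since $p_\infty$ is time-independent, $\tfrac{d}{dt}E = 2\int h\,\partial_t h\, p_\infty$. Rewriting the Fokker--Planck equation (\ref{eqn:probdensitypthetatw}) (with $\beta$ constant) in terms of $h$ and using that $D_\infty\nabla p_\infty + p_\infty\nabla L = 0$ gives
\begin{equation*}
p_\infty\,\partial_t h = D(t)\,\nabla\cdot\!\left(p_\infty\nabla h\right) + \Big(1-\tfrac{D(t)}{D_\infty}\Big)\,\nabla\cdot\!\big((1+h)\,p_\infty\nabla L\big).
\end{equation*}
The first term is the exact gradient-flow dissipation one would have for a frozen coefficient; the second is an error term caused by the fact that $p_\infty$ is the equilibrium only for the \emph{limiting} coefficient $D_\infty$, not for $D(t)$. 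Integrating by parts --- justified because (A.1) forces $p$ and $p_\infty$ to decay at infinity and (A.3) guarantees the relevant integrands are integrable, so boundary terms vanish --- yields
\begin{equation*}
\tfrac{d}{dt}E = -2D(t)\!\int \|\nabla h\|^2 p_\infty\,d\bw \;-\; 2\Big(1-\tfrac{D(t)}{D_\infty}\Big)\!\int (1+h)\,\nabla L\cdot\nabla h\; p_\infty\,d\bw .
\end{equation*}

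The core of the argument is now to feed the first (good) term into the Poincar\'e inequality and to absorb the second (error) term. Applying (\ref{eqn:poincare}) --- the Poincar\'e inequality for the Gibbs measure $p_\infty$, with constant $C_P$ --- to the mean-zero function $h$ gives $\int\|\nabla h\|^2 p_\infty \geq C_P E$. For the cross term I would integrate by parts once more, producing integrals of $h$ (and $h^2$) against the weight $\|\nabla L\|^2/D_\infty - \mathrm{Tr}(\nabla^2 L)$; Assumption (A.3) is exactly what makes this weight square-integrable against $p_\infty$, so that Cauchy--Schwarz and Young's inequality bound the cross term by $D(t)\int\|\nabla h\|^2 p_\infty$ plus a forcing controlled by $|1-D(t)/D_\infty|$. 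Absorbing this into the dissipation consumes half of it --- which is precisely why the final rate is $C_P D_\infty$ rather than $2C_P D_\infty$ --- and leaves a differential inequality of the form $\tfrac{d}{dt}E \leq -C_P D(t)\,E + \mathcal{F}(t)$ with a forcing $\mathcal{F}(t)$ that vanishes as $D(t)\to D_\infty$. Choosing $T$ so large that for $t>T$ the schedule has converged enough that $D(t)$ and the forcing are suitably close to their limits, a Gr\"onwall estimate of the form $\tfrac{d}{dt}\big(E\,e^{C_P D_\infty (t-T)}\big)\leq C_P D_\infty$ integrates to $E(t)\leq\big(E(T)+C_P D_\infty (t-T)\big)e^{-C_P D_\infty(t-T)}$, which is the stated bound since $C_P D_\infty = C_P\gamma(\infty)\beta/(2M(\infty))$ and $C(t,T)=E(T)+C_P D_\infty(t-T)$.

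I expect the main obstacle to be the genuinely time-dependent diffusion coefficient $D(t)$: because the target $p_\infty$ is the equilibrium of the limiting equation only, $p_\infty$ is not an exact stationary solution at finite time, so the naive dissipation identity acquires the error term above. Turning that error into something that can be absorbed into the Dirichlet form while still leaving an integrable forcing is the delicate step, and it is what degrades the clean exponential into the prefactor $C(t,T)$ with its linear-in-$(t-T)$ growth. A secondary technical point worth care is the scaling of the Poincar\'e constant: (A.2) is stated for $e^{-L}$, whereas the estimate needs the inequality for $p_\infty\propto e^{-L/D_\infty}$, so one must either invoke the Poincar\'e inequality directly for $p_\infty$ or track how $C_P$ rescales with $D_\infty$. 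Everything else --- the integration-by-parts identities and the Cauchy--Schwarz/Young bounds --- is routine once (A.1) and (A.3) are invoked to guarantee decay and integrability.
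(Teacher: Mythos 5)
Your proposal follows essentially the same route as the paper's proof: the same weighted-$L^2$ energy (your $h = p/p_\infty - 1$ is a cosmetic rescaling of the paper's $(p-p_\infty)/\sqrt{p_\infty}$), the same splitting of the dynamics into a frozen-coefficient dissipative term plus an error proportional to the schedule mismatch $|\eta(t)-\eta(\infty)|$ handled via Assumptions (A.2)--(A.3), the same Poincar\'e-plus-conservation-of-mass lower bound on the dissipation, the same choice of a large $T$ to absorb the mismatch (with exactly the factor-of-two loss in the rate that you describe), and the same Gronwall-type integration to the stated bound. The one delicate point --- that from $E' + C_P D_\infty E \le C_P D_\infty$ alone one cannot conclude decay to zero (it only yields $\limsup_t E \le 1$), so the final integration implicitly requires the forcing coming from $|1 - D(t)/D_\infty|$ to be exponentially small rather than merely bounded --- appears in identical form in the paper's own final ``Gronwall'' step, so it is a shared feature of both arguments rather than a divergence from the paper.
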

Theorem \ref{thm:sigm4.1proof} is new in the literature, and  its proof is given in Appendix \ref{sec:proofofthm:sigm4.1proof}. We also give a quantification of the constant $T$ in Appendix \ref{sec:quantificT}.
We make three remarks for Theorem \ref{thm:sigm4.1proof}.
First, the theorem verifies that $p(\bw,t)$ converges to the stationary solution $p_\infty(\bw)$ with an exponential convergence rate regardless of the initial value. This result provides  theoretical support for related works that analyze the density function $p(\bw,t)$ based on analysis of the stationary distribution $p_\infty(\bw)$, for example, Jastrzebski et al. \cite{Jastrzebski2017}.
Second, large batch training with increasing batch size converges exponentially slower.  
Finally, there exists a tradeoff in choosing the batch size and learning rate, since the convergence rate $\exp(-C_P\cdot (t-T)\cdot\gamma(\infty)\beta/2M(\infty))$ depends on the batch size $M$ and the learning rate $\gamma$.

From Theorem \ref{thm:sigm4.1proof}, we can also characterize the limiting behavior of $\bW(t)$ in the asymptotic regime when $t\to\infty$. 
\begin{theorem}
\label{thm:probmainresult4111}
Let $\check{\bw}$ be a  local minimizer. Then,
\begin{equation*}
\begin{aligned}
&\lim_{\e\to0}\P(|\bW(\infty) - \check{\bw}|\leq \epsilon)  \\
&=\frac{\kappa e^{-4M(\infty) L(\check{\bw})/[\gamma(\infty)\beta]}}{[2M(\infty)/\gamma(\infty)\beta]^{d/2}|\nb^2L(\check{\bw})|}\lim_{\e\to0}\l[e^{\frac{2M(\infty)\epsilon^2}{\gamma(\infty)\beta}}\prod_{j=1}^d\sqrt{1-\exp\left(-\frac{2M(\infty)\e^2\lam_j}{\pi\gamma(\infty)\beta}\right)}\r],
\end{aligned}
\end{equation*}
where  $\bw\in\R^d$, $\lambda_j$'s are eigenvalues of  the Hessian  $\nb^2L(\check{\bw})$, and
$|\nb^2L(\check{\bw})|$ is the determinant of  $\nb^2L(\check{\bw})$. The constants $\kappa$ and $\eta(\infty)$ are defined  in Lemma \ref{lem:stationbeta}.
\end{theorem}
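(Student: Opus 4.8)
The plan is to use Theorem~\ref{thm:sigm4.1proof} to identify the law of $\bW(\infty)$ with the stationary density $\pinf$, and then to evaluate the small-ball probability by a Laplace-type (Gaussian) expansion at the minimizer. First I would upgrade the weighted-$L^2$ convergence of Theorem~\ref{thm:sigm4.1proof} to convergence of the probability of a fixed ball: by Cauchy--Schwarz, for any Borel set $B$,
\begin{equation*}
\left| \int_B \left(p(\bw,t)-\pinf(\bw)\right) d\bw \right| \le \ll \frac{p(\bw,t)-\pinf(\bw)}{\sqrt{\pinf(\bw)}} \rl_{L^2(\R^d)} \left( \int_B \pinf(\bw)\, d\bw \right)^{1/2},
\end{equation*}
and since $\int_B\pinf\le 1$ the right-hand side vanishes as $t\to\infty$. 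Hence $\P(|\bW(\infty)-\check{\bw}|\le\epsilon)=\int_{|\bw-\check{\bw}|\le\epsilon}\pinf(\bw)\,d\bw$ with $\pinf(\bw)=\kappa e^{-2M(\infty)L(\bw)/[\gamma(\infty)\beta]}$ from Lemma~\ref{lem:stationbeta}.

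Next I would Taylor expand $L$ about the local minimizer. Because $\nb L(\check{\bw})=0$, on a small ball one has
\begin{equation*}
L(\bw)=L(\check{\bw})+\tfrac12 (\bw-\check{\bw})^{\top}\nb^2L(\check{\bw})(\bw-\check{\bw})+o(|\bw-\check{\bw}|^2),
\end{equation*}
so substituting $\bv=\bw-\check{\bw}$ gives, to leading order in $\epsilon$,
\begin{equation*}
\P(|\bW(\infty)-\check{\bw}|\le\epsilon)=\kappa\, e^{-\frac{2M(\infty)}{\gamma(\infty)\beta}L(\check{\bw})}\int_{|\bv|\le\epsilon} e^{-\frac{M(\infty)}{\gamma(\infty)\beta}\bv^{\top}\nb^2L(\check{\bw})\bv}\,d\bv\,\bigl(1+o(1)\bigr).
\end{equation*}
The cubic remainder is controlled uniformly on the ball of radius $\epsilon$ using the local boundedness supplied by (A.3); tracking the exponential of this remainder on $B_\epsilon$ is what yields the correction factor $e^{2M(\infty)\epsilon^2/[\gamma(\infty)\beta]}$ appearing in the statement.

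Then I would diagonalize the Hessian as $\nb^2L(\check{\bw})=Q\,\mathrm{diag}(\lam_1,\dots,\lam_d)\,Q^{\top}$ with $Q$ orthogonal, and set $\bu=Q^{\top}\bv$; the Jacobian is $1$ and the ball is invariant, so the quadratic form decouples into $\sum_{j}\lam_j u_j^2$. Evaluating the Gaussian integral coordinatewise produces a product of one-dimensional integrals, each equal to $\sqrt{\pi\gamma(\infty)\beta/[M(\infty)\lam_j]}$ times an error function of argument proportional to $\epsilon\sqrt{\lam_j}$. Collecting these prefactors together with $\kappa$ and using $\prod_j\lam_j=|\nb^2L(\check{\bw})|$ reproduces the constant $\kappa[2M(\infty)/\gamma(\infty)\beta]^{-d/2}|\nb^2L(\check{\bw})|^{-1}$, while replacing each error function by the standard closed-form approximation $\mathrm{erf}(x)\approx\sqrt{1-e^{-4x^2/\pi}}$ — which agrees with $\mathrm{erf}$ to first order as $x\to0$ — yields the product $\prod_{j=1}^{d}\sqrt{1-\exp(-2M(\infty)\epsilon^2\lam_j/[\pi\gamma(\infty)\beta])}$ of the target.

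The step I expect to be the main obstacle is the treatment of the integration domain: after diagonalization the region remains a Euclidean ball, and the Gaussian integral over a ball does \emph{not} factor across coordinates, so passing to the product form requires approximating the ball by the coordinate box $[-\epsilon,\epsilon]^d$ (equivalently, decoupling the events $|u_j|\le\epsilon$) and showing the resulting error is negligible as $\epsilon\to0$. This has to be carried out simultaneously with uniform control of the Taylor remainder on the shrinking ball, so that both the Gaussian reduction and the $\mathrm{erf}$ approximation stay valid in the limit. Since both sides of the claimed identity tend to $0$ as $\epsilon\to0$, I would read the statement as an asymptotic equivalence of the two $\epsilon$-dependent expressions and verify the matching of their leading-order behaviour; this is exactly where the bookkeeping of the normalizing constant $\kappa$ and the Hessian determinant must be done with care.
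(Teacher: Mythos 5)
Your proposal follows essentially the same route as the paper's own proof: identify the law of $\bW(\infty)$ with the stationary density $p_\infty$ from Lemma~\ref{lem:stationbeta}, Taylor-expand $L$ at $\check{\bw}$, diagonalize the Hessian, reduce the ball integral to a product of one-dimensional Gaussian integrals, and apply P\'{o}lya's closed-form approximation of the error function to produce the factors $\sqrt{1-\exp(\cdot)}$, reading the final identity as an asymptotic equivalence since both sides vanish. You are in fact more careful than the paper in two places --- the Cauchy--Schwarz upgrade from weighted-$L^2$ convergence (Theorem~\ref{thm:sigm4.1proof}) to convergence of ball probabilities, and the explicit treatment of the ball-versus-box factorization of the Gaussian integral --- both of which the paper's proof passes over silently.
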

The proof of Theorem \ref{thm:probmainresult4111} is given in Appendix \ref{eqn:proofofthfinal}.
To better appreciate Theorem \ref{thm:probmainresult4111}, we consider  two local minimizers 
$\check{\bw}_1$ and $\check{\bw}_2$ which have the same value of $L(\check{\bw}_1)=L(\check{\bw}_2)$.
Theorem \ref{thm:probmainresult4111} implies that
\begin{equation}
\label{eqn:ratioofprob}
\begin{aligned}
&\lim_{\e\to0}\frac{\P(|\bW(\infty) - \check{\bw}_1|\leq \epsilon) }{\P(|\bW(\infty) - \check{\bw}_2|\leq \epsilon)} = \sqrt{\frac{\lv\nb^2L(\check{\bw}_2)\rv}{\lv\nb^2L(\check{\bw}_1)\rv}},
\end{aligned}
\end{equation}
where the derivation is given in Appendix \ref{eqn:proofofeqnratio}.
Then, Equation (\ref{eqn:ratioofprob})  suggests that in the asymptotic regime (i.e., $t\to\infty$), the probability of the stochastic process solution $\bW(t)$ converging to a minimum with small determinant $|\nb^2L(\cdot)|$ is  larger than that of   converging to a minimum with  large determinant $|\nb^2L(\cdot)|$. In words, by Definition \ref{def:sharpness}, $\bW(t)$ is more likely to converge to flatter minima.
Moreover, the ratio in (\ref{eqn:ratioofprob}) does not depend on the batch size or learning rate, and only on the determinant of the Hessian at the minimum.

Theorems \ref{thm:sigm4.1proof} and \ref{thm:probmainresult4111} provide new insights into the escaping phenomenon in Section \ref{sec:escapingpho}. Namely, the stochastic process solution $\bW(t)$
 tends to converge to flatter minima regardless of the batch size $M$  in the asymptotic regime $t\to\infty$. However, the \emph{convergence rate} depends on the batch size. We provides experiments in Section \ref{sec:simulation} to corroborate these findings for mini-batch SGD with various datasets and neural network models.


\section{SDE Modeling for Momentum SGD}
\label{sec: model}
Momentum SGD (MSGD) is an effective approach of speeding up the mini-batch SGD; see, e.g., Qian \cite{qian1999momentum}, Nesterov \cite{nesterov2013}, Sutskever et al. \cite{Sutskever2013}.
Instead of updating $\bw_k$ directly in (\ref{eqn:minibatchsgd}), MSGD  adopts the following coupled updates:
\begin{equation*}
\label{eq: MSGD}
\begin{aligned}
	&\bz_{k+1} = \xi \cdot \bz_k - \frac{\g_k}{M_k}\sum_{n\in B_k}\nb L_n(\bw_k),\\
	&\bw_{k+1} = \bw_k + \bz_{k+1}.
\end{aligned}
\end{equation*}
where $\xi$ is the momentum parameter taking values in the range $0< \xi< 1$.  In this section, we focus on the constant learning rate and batch size: $\gamma_k\equiv \gamma, M_k\equiv M$, and leave the time-dependent case for future study. 
Let $\bv_k = \bz_k/\sqrt{\g}$.
When the step size is small, $(\bv_k, \bw_k)$ can be approximated by the SDE (see, e.g., Li et al. \cite{li2017}, An et al. \cite{an2019stochastic}), 
\begin{equation*}
\label{eq: msde}
\left\{
\begin{aligned}
&d\bV(t) = -\nb L(\bW(t)) dt -\frac{1 - \xi}{\sqrt{\g}} \bV(t) dt + \frac{\g^{1/4}}{\sqrt{M}}\sqrt{\b(W(t))} d\bB(t),\\
	&d \bW(t) = \bW(t)dt.	
\end{aligned}
\right.
\end{equation*}
where $\b(\bw)$ is the scale of the covariance function defined in \ref{eqn:isotconv}. The SDE modeling gives $\bV({k\sqrt{\g}}) \approx \bv_k$, $ \bW({k\sqrt{\g}}) \approx \bw_k $, which is shown in Appendix \ref{sec: proof of msde}.  

\subsection{Vlasov-Fokker-Planck Equation}
Denote by $\p(\bw,\bv,t)$ the joint probability density function of $(\bW(t), \bV(t))$. 
We have the following characterization of $\p(\bw,\bv,t)$ from the partial differential equations literature (e.g., Pavliotis \cite{pavliotis2014stochastic}), and also show the corresponding stationary solution.
\begin{lemma}
\label{lemma: pdf msgd}
The probability density function $\p(\bw,\bv,t)$ satisfies the following Vlasov-Fokker-Planck equation:
\begin{equation}
\label{eq: origin pde}
\begin{aligned}
    &\pt_t\p(\bw,\bv,t) + \bv \cdot\nb_\bw\p(\bw,\bv,t) -\nb L(\bw)\cdot\nb_\bv \p(\bw,\bv,t) \\
  &\quad\quad \quad\quad\quad  =  \nb_\bv \cdot\( \frac{1 - \xi}{\sqrt{\g}} \bv \p(\bw,\bv,t) + \frac{\sqrt{\gamma}\beta}{2M} \nb_\bv \p(\bw,\bv,t)\).
\end{aligned}
\end{equation}
Moreover, under  Assumption (A.1) and  $\b(\bw) \equiv \b$, the  equation (\ref{eq: origin pde}) has a stationary solution in the asymptotic regime (i.e. $t\to\infty$):
\begin{equation*}
   \psi_\infty(\bw,\bv ) = \k' e^{- \frac{2M}{\g\b}(1-\xi)\l(L(\bw) + \frac{\lv \bv \rv^2}2\r)},
\end{equation*}
where $\k'$ is a normalization constant such that $\int \psi_\infty d\bw d\bv  = 1$.
\end{lemma}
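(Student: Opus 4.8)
The plan is to prove the two assertions of Lemma \ref{lemma: pdf msgd} separately: first deriving the Vlasov-Fokker-Planck (VFP) equation (\ref{eq: origin pde}) from the coupled SDE for $(\bV(t),\bW(t))$, and then verifying that $\psi_\infty(\bw,\bv)$ is a stationary solution. For the first part, I would appeal to the standard forward Kolmogorov (Fokker-Planck) machinery for a degenerate diffusion. The coupled system has drift vector $b(\bw,\bv) = \big(\bv,\, -\nb L(\bw) - \tfrac{1-\xi}{\sqrt{\g}}\bv\big)$ in the $(\bw,\bv)$ variables, while the Brownian term enters only through the $\bv$-component with diffusion coefficient $\tfrac{\g^{1/4}}{\sqrt{M}}\sqrt{\b}$. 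The general Fokker-Planck equation reads $\pt_t\p = -\nb_{(\bw,\bv)}\cdot(b\,\p) + \tfrac12\nb_{(\bw,\bv)}\cdot(D\,\nb_{(\bw,\bv)}\p)$, where the diffusion matrix $D$ has a single nonzero block $\tfrac{\sqrt{\g}\b}{M}\mathbf I$ acting on the $\bv$ variables. Expanding the divergence of the drift term yields the transport part $\bv\cd\nb_\bw\p - \nb L(\bw)\cd\nb_\bv\p$ together with the friction piece $\nb_\bv\cd\big(\tfrac{1-\xi}{\sqrt{\g}}\bv\,\p\big)$, and the diffusion block gives $\tfrac{\sqrt{\g}\b}{2M}\Delta_\bv\p = \nb_\bv\cd\big(\tfrac{\sqrt{\g}\b}{2M}\nb_\bv\p\big)$; collecting terms reproduces (\ref{eq: origin pde}) exactly.

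For the second part, I would substitute $\psi_\infty$ into the VFP operator and check that each side vanishes. Writing $\psi_\infty = \k' e^{-\a H}$ with $H(\bw,\bv) = L(\bw) + \tfrac12\lv\bv\rv^2$ and $\a = \tfrac{2M}{\g\b}(1-\xi)$, the key gradients are $\nb_\bw\psi_\infty = -\a\,\nb L(\bw)\,\psi_\infty$ and $\nb_\bv\psi_\infty = -\a\,\bv\,\psi_\infty$. I expect the proof to split into two groups of terms that cancel independently. The transport (Hamiltonian) part $\bv\cd\nb_\bw\psi_\infty - \nb L(\bw)\cd\nb_\bv\psi_\infty$ becomes $-\a\,\psi_\infty\big(\bv\cd\nb L(\bw) - \nb L(\bw)\cd\bv\big) = 0$ by symmetry, reflecting that $H$ is conserved along the deterministic Hamiltonian flow. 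The dissipative part requires that the friction and diffusion balance: computing $\nb_\bv\cd\big(\tfrac{1-\xi}{\sqrt{\g}}\bv\,\psi_\infty + \tfrac{\sqrt{\g}\b}{2M}\nb_\bv\psi_\infty\big)$ and using $\nb_\bv\psi_\infty = -\a\bv\psi_\infty$ with the specific value of $\a$, the coefficient $\tfrac{\sqrt{\g}\b}{2M}\cdot\a = 1-\xi$ makes the two contributions inside the divergence proportional to $\big(\tfrac{1-\xi}{\sqrt{\g}} - \tfrac{\sqrt{\g}\b}{2M}\a\big)\bv\,\psi_\infty$, which vanishes. Hence both parts are zero and $\pt_t\psi_\infty = 0$.

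I would note where Assumption (A.1) is used, namely to guarantee that $\psi_\infty$ is integrable so that the normalization constant $\k'$ is well defined: confinement of $L$ ensures $\int e^{-\a L(\bw)}d\bw < \infty$, while the Gaussian factor $e^{-\a\lv\bv\rv^2/2}$ is automatically integrable in $\bv$. The main obstacle, modest as it is, lies in handling the degeneracy of the diffusion: unlike the elliptic Fokker-Planck equation of Lemma \ref{lem:fokkerplank}, here the noise acts only on the velocity variable, so the stationary measure is \emph{not} simply the Gibbs measure of the full generator but the product of the spatial Gibbs factor with the velocity Maxwellian, and the verification genuinely relies on the Hamiltonian cancellation in the transport terms rather than on a gradient-flow structure. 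I would therefore organize the calculation to exhibit the transport cancellation and the fluctuation-dissipation balance as the two distinct mechanisms, and confirm that the precise prefactor $(1-\xi)$ in the exponent is exactly what is dictated by matching the friction coefficient $\tfrac{1-\xi}{\sqrt{\g}}$ against the diffusion coefficient $\tfrac{\sqrt{\g}\b}{2M}$.
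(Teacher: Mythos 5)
Your proposal is correct and follows essentially the same route as the paper: the paper derives (\ref{eq: origin pde}) by applying It\^o's formula to a test function and passing to the weak form (i.e., precisely the SDE--forward-Kolmogorov correspondence you invoke), and it verifies stationarity of $\psi_\infty$ by the same direct substitution you carry out, with the Hamiltonian transport cancellation and the friction--diffusion balance. One harmless slip in your prose: $\frac{\sqrt{\g}\b}{2M}\,\a = \frac{1-\xi}{\sqrt{\g}}$, not $1-\xi$, but the displayed cancellation $\left(\frac{1-\xi}{\sqrt{\g}} - \frac{\sqrt{\g}\b}{2M}\a\right) = 0$ that your argument actually uses is the correct one.
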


We give a proof in Appendix \ref{sec: proof of lemmsgd}. 
By integrating $\psi_\infty(\bw,\bv)$ over $\bv$, we obtain that $\int \psi_\infty(\bw,\bv) d\bv = \k e^{- \frac{2M}{\g\b}(1-\xi)L(\bw)}$, which is similar to the stationary solution in Lemma \ref{lem:stationbeta} and implies the equation (\ref{eqn:ratioofprob}) for MSGD. Hence, the stochastic process $\bW(t)$ for MSGD-related SDE tends to converge to flatter minima regardless of the batch size in the asymptotic regime $t\to\infty$.
However, we show in Section \ref{sec: main results} that the convergence rate depends on the batch size.

\subsection{Escaping Phenomenon of MSGD-Related SDE}
\label{sec: main results}

In this section, we require an additional assumption. 
\begin{itemize}
\item[](A.4) There exists a constant $\o$ such that the absolute values of eigenvalues of the matrix $\{\|(\nb^2\tL)_{ij}\|_{\infty}\}_{1\leq i,j\leq d}$ are bounded by a constant $b>0$, where $\tL(\bw) = L(\bw) - \frac12\o^2 \|\bw\|^2$ and $\{\|(\nb^2\tL)_{ij}\|_{\infty}\}_{1\leq i,j\leq d}$ consists of the $(i,j)$th entry  $\|(\nb^2\tL)_{ij}\|_{\infty} \equiv \sup_\bw|(\nb^2\tL)_{ij}|$.
\end{itemize}
We prove in Appendix \ref{sec:discassumpa4} that Assumption (A.4)  holds for typical loss functions including the regularized mean cross entropy and the squared loss. 
\begin{theorem}
\label{thm: main thm}
Under Assumption (A.1)--(A.4), the density function $\p(\bw,\bv,t)$ of $(\bW(t),\bV(t))$ converges to the stationary solution $\psi_\infty(\bw,\bv)$. Moreover, there exists $T>0$ such that for any $t>T$, 
\begin{equation*}
  \begin{aligned}
    &\ll \frac{\p(\bw,\bv,t) - \psi_\infty(\bw,\bv)}{\sqrt{\psi_\infty(\bw,\bv)}} \rl_{L^2(\R^{2d})}^2 \leq \frac{\g\b}{2M\min\{ C_P, d\}(1-\xi)\lam_{\min}} e^{-2(\mu - \kk)t} H(0).
  \end{aligned}
\end{equation*}
The parameters are specified as follows. First,  $C_P$ is the Poincar\'e constant defined in (\ref{eqn:poincare}). 
Define 
\begin{equation*}
h(\bw,\bv,t) \equiv \frac{\p(\bw,\bv,t) - \psi_\infty(\bw,\bv)}{\psi_\infty(\bw,\bv)}\quad \text{ and matrix }\quad P \equiv\l[ \begin{aligned} &I_d & \hC I_d\\
&\hC I_d &C I_d\end{aligned}\r],
\end{equation*}
where the constants $C$ and  $\hC$ together of the decay rate $\mu$ are determined by
\begin{equation*}
    \l\{ \begin{aligned}
      &\text{if } \frac{1 - \xi}{\sqrt{\g}} < 2\o: \mu \equiv  \frac{1 - \xi}{\sqrt{\g}}, \ C \equiv \o^2,\ \hC \equiv  \frac{1 - \xi}{2\sqrt{\g}};\\
      &\text{if } \frac{1 - \xi}{\sqrt{\g}} \geq 2\o: \mu \equiv  \frac{1 - \xi}{\sqrt{\g}} - \sqrt{ \frac{(1 - \xi)^2}{\g} - 4\o^2},  \ C \equiv  \frac{(1 - \xi)^2}{2\g}-\o^2, \ \hC \equiv  \frac{1 - \xi}{2\sqrt{\g}}.
    \end{aligned}
    \r.
  \end{equation*} 
Next, let $\lam_{\min}$ be the smallest eigenvalue of  the matrix $P$.  Finally, let
$$\ds H(0) = \int[\nb_\bw h(\bw,\bv,0), \nb_\bv h(\bw,\bv,0)]^\top P [\nb_\bw h(\bw,\bv,0), \nb_\bv h(\bw,\bv,0)]\psi_\infty d\bw d\bv,$$
and  $\ds \kk =  \frac{(1+\sqrt2)b}{2\lam_{\min}} $, where $b$ the upper bound defined in  Assumption (A.4),
\end{theorem}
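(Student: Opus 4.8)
The plan is to prove the convergence rate by a hypocoercivity argument, since the Vlasov--Fokker--Planck equation (\ref{eq: origin pde}) has a degenerate diffusion acting only in the velocity variable $\bv$. First I would pass to the relative perturbation $h(\bw,\bv,t)$ defined in the statement, so that $\p = \psi_\infty(1+h)$, and derive its evolution from (\ref{eq: origin pde}). Writing the generator as the sum of an antisymmetric transport part $\bv\cd\nb_\bw - \nb L(\bw)\cd\nb_\bv$ and a symmetric friction--diffusion part acting in $\bv$, one finds that the only dissipation available in $L^2(\psi_\infty)$ is $-\frac{\sqrt\g\b}{2M}\ll\nb_\bv h\rl^2$. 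Because this controls no derivative in $\bw$, the relaxation of $\ll h\rl_{L^2(\psi_\infty)}$ cannot be read off directly; the remedy is to track the twisted gradient energy $H(t)$ built from the positive definite matrix $P$, whose off-diagonal block $\hC I_d$ couples the $\bw$- and $\bv$-gradients.

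The central step is to differentiate $H(t)$ along the flow. Applying $\nb_\bw$ and $\nb_\bv$ to the equation for $h$ and using the commutator identities $[\nb_\bv, \bv\cd\nb_\bw]=\nb_\bw$, $[\nb_\bw, \nb L\cd\nb_\bv]=(\nb^2 L)\nb_\bv$, and $[\nb_\bw, \bv\cd\nb_\bw]=0$, the time derivative of $H(t)$ reduces to a quadratic form in $(\nb_\bw h, \nb_\bv h)$ weighted by $\psi_\infty$. The crucial point is that the cross entry $\hC I_d$ of $P$ combines with the identity $[\nb_\bv, \bv\cd\nb_\bw]=\nb_\bw$ to convert the velocity dissipation into a genuine $-\ll\nb_\bw h\rl^2$ term; this is exactly the hypocoercive mechanism. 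I would then split the potential as $L(\bw)=\tfrac12\o^2\ll\bw\rl^2+\tL(\bw)$: the quadratic part produces an exactly solvable linear system for the energy whose decay rate is $\mu$, and the two prescriptions for $(C,\hC,\mu)$ correspond precisely to the underdamped regime $\frac{1-\xi}{\sqrt\g}<2\o$ (complex eigenvalues) and the overdamped regime $\frac{1-\xi}{\sqrt\g}\geq2\o$ (real eigenvalues) of the associated damped harmonic oscillator.

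The non-quadratic remainder $\tL$ enters only through its Hessian, and by Assumption (A.4) the entrywise sup-norms are bounded by $b$; estimating the resulting cross terms by Cauchy--Schwarz yields an error controlled by $\kk=\frac{(1+\sqrt2)b}{2\lam_{\min}}$ relative to $H(t)$, where $\lam_{\min}$ is the smallest eigenvalue of $P$. Combining the harmonic decay $\mu$ with this error gives the differential inequality $\frac{d}{dt}H(t)\leq -2(\mu-\kk)H(t)$, and Gr\"onwall's inequality then yields $H(t)\leq e^{-2(\mu-\kk)t}H(0)$. Finally I would convert this gradient bound back to the $L^2$ norm: positive definiteness of $P$ gives $H(t)\geq\lam_{\min}\big(\ll\nb_\bw h\rl^2+\ll\nb_\bv h\rl^2\big)_{L^2(\psi_\infty)}$, while the Poincar\'e inequality (\ref{eqn:poincare}) in $\bw$ together with the Gaussian Poincar\'e inequality in $\bv$ (whose constant scales with $d$) bounds $\ll h\rl^2_{L^2(\psi_\infty)}$ by the gradient energy, producing the factor $\min\{C_P,d\}$; and because $\psi_\infty$ carries the potential scaling $\frac{2M(1-\xi)}{\g\b}$, the effective Poincar\'e constant is rescaled so as to contribute the prefactor $\frac{\g\b}{2M(1-\xi)}$. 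Since $\ll h\rl^2_{L^2(\psi_\infty)}=\ll(\p-\psi_\infty)/\sqrt{\psi_\infty}\rl^2_{L^2(\R^{2d})}$, collecting these prefactors yields the stated bound.

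The main obstacle will be the hypocoercivity estimate in the central step: one must verify that, after the commutator computation, the quadratic form is negative definite with the claimed rate, which hinges on choosing $(C,\hC)$ so that the cross terms exactly close the coercivity gap in $\bw$ while the Hessian contribution of $\tL$ is absorbed into $\kk$. The careful bookkeeping of the $\tL$-terms and the sharp tracking of the two damping regimes are where the argument is most delicate, and ensuring the net rate $\mu-\kk$ remains positive is what ultimately ties the bound to the smallness of $b$ in Assumption (A.4).
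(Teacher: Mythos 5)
Your proposal follows essentially the same route as the paper's proof: the hypocoercivity argument with the twisted gradient functional $H(t)$ built from the matrix $P$, the splitting $L(\bw)=\tfrac12\o^2\ll\bw\rl^2+\tL(\bw)$ with the quadratic part yielding the two damping regimes for $(\mu,C,\hC)$ and the $\tL$-Hessian terms absorbed into $\kk$ via Assumption (A.4), followed by Gr\"onwall and the Poincar\'e inequality with respect to $\psi_\infty$ to recover the $L^2$ bound with the prefactor $\frac{\g\b}{2M(1-\xi)\min\{C_P,d\}\lam_{\min}}$. The commutator identities you cite are exactly the computation the paper performs by differentiating the equation for $h$ in $\bw$ and $\bv$ and testing against the weighted gradients, so the two arguments coincide step for step.
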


From Theorem \ref{thm: main thm}, it is clear that the large batch training (i.e., as $M$ increases) has a slower convergence as compared with the small batch training. 
Proof of this theorem is given in Appendix \ref{sec: proof of main thm}. Theorem \ref{thm: main thm} is new in the literature, and it builds on the result for quadratic function $L(\bw) = \frac{1}{2}\o^2 \ll \bw \rl^2$ in the literature (e.g., Pavliotis \cite{pavliotis2014stochastic}).  Theorem \ref{thm: main thm} is applicable to general loss functions, including  the regularized mean cross entropy and the squared loss.

The main difficulty in the proof is that equation (\ref{eq: origin pde}) is a degenerate diffusion PDE in the sense that it only has the diffusion on the $\bv $ direction without the diffusion on the $\bw$ direction; see, an overview on the diffusion PDE in  Evans \cite{evans2010}. 
We use the tools from \emph{hypocoercivity} (see, Vallani \cite{vallani2009}), which links a 
degenerate diffusion operator and a conservative operator.
The key idea in the proof is to construct a Lyapunov functional  $H(t)$ (\cite{vallani2009}):
\begin{equation*}
  H(t) =\ll \nb_\bw h \rl^2_\st + C\ll \nb_\bv h \rl^2_\st + 2\h{C}\la\nb_\bw h, \nb_\bv h \ra_\st,
\end{equation*}
where $\la h, g\ra_\st \equiv \int  hg  \psi_\infty d\bw d\bv$ and $\ll h \rl_\st$ is the corresponding norm. The above equation can be equivalently written as,
\begin{equation}
  \label{def: P}
  H(t) = \int [ \nb_\bw h,   \nb_\bv h ]^\top P [ \nb_\bw h,   \nb_\bv h ] \psi_\infty d\bw d\bv ,\qd \text{with }P =  \left[ 
    \begin{aligned} 
    & I_d  &\h{C}I_d\\
    &\h{C}I_d &CI_d
    \end{aligned}\right].
\end{equation}
where $C, \h{C}$ are constants to be determined.  Note that
\begin{equation*}
  \frac{d}{dt}H(t) =\frac{d}{dt}\(\ll \nb_\bw h \rl^2_\st + C\ll \nb_\bv h \rl^2_\st\) +
  2\h{C}\frac{d}{dt}\la\nb_\bw h, \nb_\bv h \ra_\st,
\end{equation*}
which implies following inequality with some constant $\t{C}$,
\begin{equation*}
d_t H(t) + \t{C} H(t) \leq 0,
\end{equation*}
and the exponential decay of $H(t)$. Finally, the relationship between $H(t)$ and $h(t)$ in (\ref{def: P}) leads to the exponential decay for $\ll h(t) \rl_*^2$ as required for Theorem \ref{thm: main thm}.

\section{Numerical Experiments}
\label{sec:simulation}

We perform experiments using various datasets and deep learning models to corroborate 
 theoretical findings in Sections \ref{sec:minnibatchsgd}--\ref{sec: model}.

\subsection{Escaping Phenomenon for Mini-Batch SGD}
\label{sec:escapingsim}

We consider three different neural network models: a four-layer multilayer perception (MLP) with ReLU  activation function and batch normalization \cite{Ioffe2015batch}, a shallow convolutional network N1, and a deep convolutional network N2. The N1 network is  a modified AlexNet configuration (Krizhevsky et al., \cite{Krizhevsky2012imagenet}), and the N2 network is a modified  VGG configuration (Simonyan and Zisserman \cite{Simonyan2015}). 
 We test and train the MLP with the MNIST dataset (LeCun et al. \cite{LeCun1998}), and N1 and N2 with the CIFAR-10 dataset, using the mean cross entropy as  the loss function. Details on the networks and dataset are given in Appendix \ref{sec:netdataexp}.
 We study the escaping phenomenon of the mini-batch SGD with four pairs of learning rate and batch size: $(\gamma,M)$ $= (0.1,64),$ $(0.1,128),$ $(0.2,256),$ $(0.2,512)$.  
A total of $100$ epochs for each $(\gamma,M)$ are trained, where the training loss stops decreasing.
We repeat each experiment 100 times and average the results in Figure \ref{fig:f3}.
 Due to the high computational cost for computing the determinant of Hessian, we use the 
Frobenius norm of Hessian as a substitute, which is similar to Wu et al.  \cite{wu2017}.  A smaller $y$-value in  Figure \ref{fig:f3} indexes a flatter minimum.  
 The $x$-axis denotes the number of steps, which equals  $\text{the number of epoch }*N/M$, where $N$ is the training sample size, and $M$ is the batch size.

\begin{figure}[h!]
    \centering
    \includegraphics[width=\textwidth]{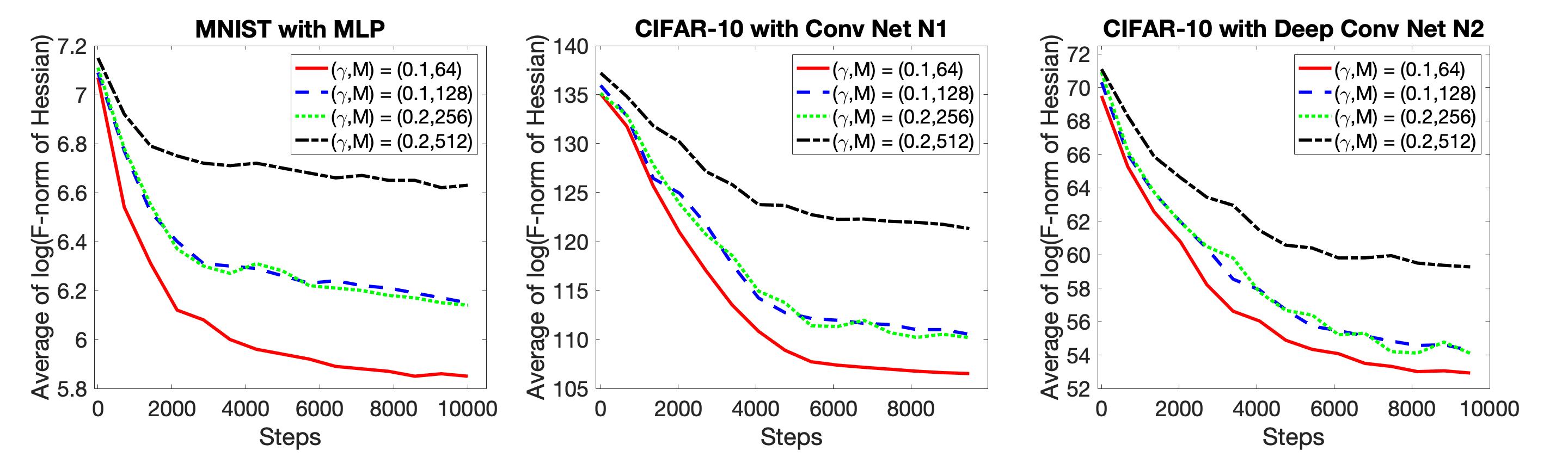}
    \caption{Log of Frobenius norm of Hessian as a function of steps. The left plot is 4-layer batch-normalized MLPs with  MNIST dataset. The middle plot is convolutional network N1 with CIFAR-10 dataset. The right plot is deep convolutional network N2 with CIFAR-10 dataset.
Four $(\gamma,M)$ pairs  are studied: $(0.1, 64),$ $(0.1,128),$  $(0.2,256)$, and $(0.2,512)$, which are denoted in 
 red, blue, green, and black, respectively. The plots show the averaged results of 100 experiments for each of the four $(\gamma,M)$ pairs. }
    \label{fig:f3}
\end{figure}

Figure \ref{fig:f3} shows  that under the same learning rate, the large batch training converges to sharper minima, for example, comparing the red solid curves with the blue dashed curves for all three plots, which agrees with Lemma \ref{thm: exitingtime}. The mini-batch SGD with the same $\gamma/M$ ratio follows a similar dynamic trajectory in terms of sharpness, which is consistent with the result of the SDE modeling in Lemma  \ref{lem:stationbeta}. 

In the asymptotic regime, Theorem \ref{thm:sigm4.1proof} shows that the large batch training converges to a flat minimum slower as compared with the small batch training. This is clear from Figure \ref{fig:f3}. For example, the black dash-dot curve in the right plot takes $10,000$ steps to converge at a minimum of  $y=58$, while the green dotted curve only takes $4,000$ steps to achieve it. 
On the other hand, for any batch size, SGD is more likely to saturate with a flatter minimum. For example, in the average case, the black curve in the right plot explores minima with $y$-values range from $58$ to $71$ while it ends up with a minimum of $y=58$, which corroborates Theorem \ref{thm:probmainresult4111}.




\subsection{Escaping Phenomenon for Momentum SGD}
\label{sec:escapingsimmsgd}

We empirically study the escaping phenomenon for momentum SGD. We use the three neural network models as Section \ref{sec:escapingsim}: MLP, convolutional network N1, and deep convolutional network N2, which are trained with MNIST, CIFAR-10, and CIFAR-10, respectively. We consider four pairs of momentum parameter and batch size: $(\xi, M) = (0.9,64),(0.9,128),(0.99,64),(0.99,128)$, while the learning rate is $\gamma=0.1$.
A total of $100$ epochs for each $(\xi,M)$ are trained, where the training loss stops decreasing near the ending of the training.
We repeat each experiment 100 times and average the results in Figure \ref{fig:f4}.

\begin{figure}[h!]
    \centering
    \includegraphics[width=\textwidth]{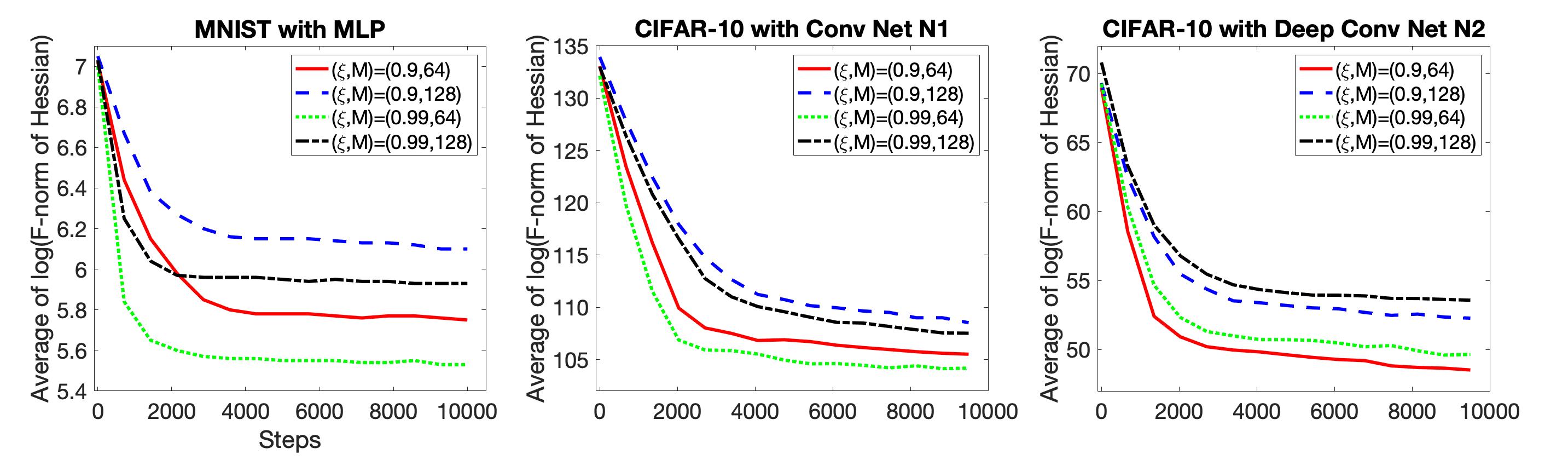}
    \caption{Log of Frobenius norm of Hessian as a function of steps. The left plot is 4-layer batch-normalized MLPs with  MNIST dataset. The middle plot is convolutional network N1 with CIFAR-10 dataset. The right plot is deep convolutional network N2 with CIFAR-10 dataset.
Four $(\xi, M)$ pairs  are studied: $(0.9,64),$ $(0.9,128),$  $(0.99,64)$, and $(0.99,128)$, which are denoted in 
 red, blue, green, and black, respectively. The plots show the averaged results of 100 experiments for each of the four $(\xi,M)$ pairs. }
    \label{fig:f4}
\end{figure}

Figure \ref{fig:f4} shows that under the same momentum parameter, the large batch training converges to sharp minima. 
In the asymptotic regime, Theorem \ref{thm: main thm} shows that the large batch training converges to a flat minimum slower compared to the small batch training, which is clear from Figure \ref{fig:f4}. For example, the blue dashed curve in the middle plot takes $8,000$ steps to converge at a minimum of $y=110$, while the red solid curve only takes $2,000$ steps to achieve it. This phenomenon is robust to the momentum parameter (e.g., $\xi=0.9$ or $0.99$). 
On the other hand, Theorem \ref{thm: main thm} suggests there is no monotonic rule for tuning the momentum parameter $\xi$ since both  $\lambda_{\min}$ and $\mu$ depend on $\xi$. We observe a similar pattern in Figure \ref{fig:f4}. While $\xi=0.99$ leads the momentum SGD to converge to flatter minima for MLP and N1 networks, $\xi=0.99$ ends up with sharper minima for N2.

\section{Related Work}
\label{sec:relatedwork}

Our work continues the line of research on the geometry of SGD, see, for example, Bottou et al. \cite{bottou2018} for a comprehensive review. In particular, our interest lies in the role of large batch size and the
sharpness of minima found in terms of generalization; see, e.g., \cite{keskar} \cite{goyal2017} \cite{hoffer}. 
Keskar et al. \cite{keskar}  find, based on empirical experiments, that the large batch training tends to converge to a sharp minimum. Goyal et al. \cite{goyal2017} and Hoffer et al. \cite{hoffer}  observed through experiments that training for more epochs and scaling up the learning rate give good generalization when using large batch size. 
This paper is complementary to the existing works in this direction. 
Motivated by partial differential equation theory, we define the sharpness in terms of the determinant of the Hessian, which provides a new perspective into the discussion on the definition of the sharpness (e.g., Dinh et al. \cite{Dinh}).  We explain theoretically and empirically the dynamic of the convergence and escaping phenomenon relating to the batch size for mini-batch SGD and momentum SGD. 

Several authors have developed the relationship between SGD and sampling a posterior distribution via stochastic Langevin methods; see, e.g., Chaudhari et al., \cite{Chaudhari2017deep}, Mandt et al. \cite{mandt2017}. In particular,  Mandt et al. \cite{mandt2017}  study SGD using an approximate Bayesian inference method in a locally convex setting. 
The modeling of SGD as a  continuous time stochastic process can also be achieved using SDE; see, e.g., \cite{li2017} \cite{Smith2018} \cite{chaudhari2017stochastic}. In particular,
Li et al. \cite{li2017} rigorously derive an approximation error of SDE solution to SGD in the finite-time regime. 
Smith and Le \cite{Smith2018} use Bayesian principles to relate the generalization error with the batch size. 
Chaudhari and Soatto  \cite{chaudhari2017stochastic} discuss the stationary non-equilibrium solution for the stochastic differential equation, where they allow the  gradient noise to be non-isotropic, but require additional conditions to
enforce the stationary distribution to be path-independent. 
Instead, we strictly focus on the convergence rate of the SDE solution to the stationary distribution with isotropic noise. This approach allows us to explore the dynamics of the convergence relating to the batch size and sharp minima, which results are verified empirically with various datasets and deep neural network models.

We discuss the Fokker-Planck equation and its variant, which modelings have appeared in the machine learning literature. 
Heskes and Kappen \cite{heskes1993} derive a Gibbs distribution in the online setting. Jastrzebski et al. \cite{Jastrzebski2017} discuss how the width and height of minima correlate with the learning rate to batch size ratio, but they focus on the stationary equilibrium distribution. 
Our result also show that the ratio of learning rate to batch size is correlated with sharpness of minima (e.g., Lemma \ref{lem:stationbeta}) in the stationary solution. 
In contrast to other work, we derive new results on the dynamic trajectory of the Fokker-Planck solution including the convergence rate in terms of the batch size, which provides new insights into the escaping phenomenon for mini-batch SGD and momentum SGD.

\section{Conclusion}
\label{sec:discussion}

We  study the convergence rate of the SDE solution to the stationary distribution, which is new in the literature and  allows us to explore the dynamics of the escaping phenomenon and the relationship with the batch size and sharp minima. The perspective from the Fokker-Planck equation and its variant provide novel insights into the escaping phenomenon for mini-batch SGD and momentum SGD. Namely, the stochastic process solution  tends to converge to flatter minima regardless of the batch size in the asymptotic regime. However, the convergence rate depends on the batch size. These results are validated theoretically and empirically with various datasets and deep neural network models.

We made the isotropic assumption on the covariance of the gradients, which is to derive a closed form for the convergence rate of the SDE solution to the stationary distribution. 
It is of interest to study whether the practical techniques such as batch normalization would give a covariance  of the gradients close to the isotropy. We also leave the study of extending this paper to anisotropic covariance structure for future work. 
Finally, the derived asymptotic dynamic reflects the transition dynamics of the SDE,  which is an idealization of SGD. 
For the asymptotic regime to directly represent the SGD escape dynamics, one requires the additional uniform-in-time approximation of SGD by SDE, which remains an open question for non-convex loss functions.


\appendix

\section{Proofs for Section  \ref{sec:minnibatchsgd}}
\label{sec:proofofexpvarlnbw}

\subsection{Mean and Variance for Random Error Vector}
\label{sec:meanvarep}
By the mean value theorem with some $\tau(h)\in(0,h)$,
 \begin{equation*}
\begin{aligned}
\nabla L(\bw) &=  \frac{d}{d\bw}\E[L_n(\bw)] \\
&= \lim_{h\to0}\frac{1}{h}\left\{\E[L_n(\bw+h)] - \E[L_n(\bw)]\right\}\\
&= \lim_{h\to0}\E\left\{\frac{L_n(\bw+h) - L_n(\bw)}{h}\right\} = \lim_{h\to0}\E\left\{\nabla L_n(\bw+\tau(h))\right\}.
\end{aligned}
\end{equation*}
By the continuity of $\nabla L_n$ and the dominated convergence theorem, 
\begin{equation*}
\lim_{h\to0}\E\left\{\nabla L_n(\bw+\tau(h))\right\} = \E\left\{\lim_{h\to0}\nabla L_n(\bw+\tau(h))\right\}  = \E\left\{\nabla L_n(\bw)\right\}.
\end{equation*}
Hence, $\boldsymbol{\epsilon}_k$ has mean 0. Since the independent and uniform sampling for the mini-batch $B_k$, we have  $\text{Var}[\boldsymbol{\epsilon}_k]  = \bssigma^2(\bw)$ as desired.

We remark that a different view of sampling distribution has been adopted in the literature, for example, \cite{li2017} and 
\cite{Jastrzebski2017}, where the expectation and variance are taken with respect to the sampling distribution of drawing the mini-batch $B_k$ from  $\{1,\ldots,N\}$. On the contrary, we use the sampling distribution with respect to the joint distribution of the underlying population, since our interest is the risk function
$L(\cdot)$ instead of the sample average loss 
\begin{equation*}
\frac{1}{N}[L_1(\cdot)+\cdots+L_N(\cdot)],
\end{equation*}
and we regard the training data only a subset of the underlying population.

\subsection{Proof of Lemma \ref{lem:fokkerplank}}
\label{sec:proofoflemfokkerplank}

We first consider a special case that $\beta(\bw) \equiv\beta$ is a constant and derive the Fokker-Planck equation by following Kolpas et al. \cite{kolpas2007coarse}.
If $\bW(t)=W(t)\in\R$, $W(t)$ is a Markov process and the Chapman-Kolmogorov equation gives  the conditional probability density function  for any $t_1\leq t_2\leq t_3$,
\begin{equation*}
p\left(W(t_3)|W(t_1)\right) = \int_{-\infty}^{+\infty} p\left(W(t_3)|W(t_2)=w\right)p\left(W(t_2)=w|W(t_1)\right)dw.
\end{equation*}
Denote the integral
\begin{equation}
\label{eqn:defofih}
I(h) = \int_{-\infty}^{+\infty}h(w)\partial_t p(w,t|W)dw,
\end{equation}
where $h(w)$ is a smooth function with compact support. Observe that
\begin{equation*}
\int_{-\infty}^{+\infty}h(w)\partial_t p(w,t|W)dw = \lim_{\Delta t\to 0 }\int_{-\infty}^{+\infty}
h(w)\left(\frac{p(w,t+\Delta t|W) - p(w,t|W)}{\Delta t}\right)dw.
\end{equation*}
Letting $Z$ be an intermediate point between $w$ and $W$. Applying the Chapman-Kolmogorov identity on the right hand side yields
\begin{equation*}
\lim_{\Delta t\to 0 }\frac{1}{\Delta t}\left(\int_{-\infty}^{+\infty}h(w)\int_{-\infty}^{+\infty}p(w,\Delta t|Z)p(Z,t|W)dZdw-\int_{-\infty}^{+\infty}h(w)p(w,t|W)dw\right).
\end{equation*}
By changing the order of integrations in the first term and letting $w$ approach $Z$ in the second term, we obtain that
\begin{equation*}
\lim_{\Delta t\to 0}\frac{1}{\Delta t}\left(\int_{-\infty}^{+\infty}p(Z,t|W)\int_{-\infty}^{+\infty}p(w,\Delta t|Z)(h(w)-h(Z))dwdZ\right).
\end{equation*}
Expand $h(w)$ as a Taylor series about $Z$, we can write the above integral as
\begin{equation*}
\lim_{\Delta t\to 0}\frac{1}{\Delta t}\left(\int_{-\infty}^{+\infty}p(Z,t|W)\int_{-\infty}^{+\infty}p(w,\Delta t|Z)\sum_{n=1}^{\infty}h^{(n)}(Z)\frac{(w-Z)^n}{n!}\right)dwdZ.
\end{equation*}
Now we define the function 
\begin{equation*}
D^{(n)}(Z)=\frac{1}{n!}\frac{1}{\Delta t}\int_{-\infty}^{+\infty}p(w,\Delta t|Z)(w-Z)^ndw.
\end{equation*}
We can write the integral $I(h)$ defined in (\ref{eqn:defofih}) as
\begin{equation*}
\int_{-\infty}^{+\infty}h(w)\partial_t p(w,t|W)dw = \int_{-\infty}^{+\infty}p(Z,t|W)\sum_{n=1}^{\infty}D^{(n)}(Z)h^{(n)}(Z)dZ.
\end{equation*}
Taking the integration by parts $n$ times gives
\begin{equation*}
\partial_t p(w,t) = \sum_{n=1}^{\infty}-\frac{\partial^n}{\partial Z^n}\left[D^{(n)}(Z)p(Z,t|W)\right].
\end{equation*}
Let $D^{(1)}(w) = -L(w)$, 
$D^{(2)}(w) = -\gamma(t)\beta/[2M(t)]$ and $D^{(n)}(w) = 0$ for all $n\geq 3$. Then the above equation yields
\begin{equation*}
\partial_t p(w,t) =  \frac{\partial}{\partial w}\left[\nabla L(w)p(w,t)\right] + \frac{\partial}{\partial w^2}\left[\frac{\gamma(t)\beta}{2M(t)}p(w,t)\right],
\end{equation*}
which is the Fokker-Planck equation in one variable. 
For the multidimensional case that $\bW=(W_1,W_2,\ldots,W_p)\in\R^p$, we similarly generalize the above procedure  to get 
\begin{equation}
\label{eqn:fkpconstbeta}
\begin{aligned}
\partial_t p(\bw,t) & =  \sum_{i=1}^p\frac{\partial}{\partial w_i}\left[\nabla L(\bw)p(\bw,t)\right] +\sum_{i=1}^p \frac{\partial^2}{\partial w_i^2}\left[\frac{\gamma(t)\beta}{2M(t)}p(\bw,t)\right]\\
& = \nabla\cdot\left(\nabla L(\bw)p +\frac{\gamma(t)\beta}{2M(t)}\nabla p\right).
\end{aligned}
\end{equation}
Since $\bW(0)=\bw_0$, $p(\bw,0)=\delta(\bw_0)$. This completes the derivation of the Fokker-Planck equation for constant $\beta(\bw) =\beta$.

For deriving (\ref{eqn:probdensitypthetatw}) with general $\beta(\bw)$, we can simply apply (\ref{eqn:fkpconstbeta}) together with the fact that 
\begin{equation*}
\nabla\left[\frac{\gamma(t)\beta(\bw)}{2M(t)}p\right] = \nabla\left[\frac{\gamma(t)\beta(\bw)}{2M(t)}\right]p + \frac{\gamma(t)\beta(\bw)}{2M(t)}\nabla p.
\end{equation*}
This completes the proof.

\section{Proofs for Section  \ref{sec:mainresults}}
\label{sec:proofofsecconvergence}

\subsection{Discussion on Main Assumptions (A.1)--(A.3)}
\label{sec:disonassump}

We show that Assumptions (A.1)--(A.3) hold for the squared loss and the  regularized  mean cross entropy loss. 
Denote by $\{(\bx_n,y_n),1\leq n\leq N\}$ the set of training data. 
Without loss of generality, let $\text{Var}[y_n|\bx_n] = 1$.
First, we consider the squared loss with the corresponding risk function
\begin{equation*}
L(\bw) = (\bw-\bw^0)^\top\E[\bx_n\bx_n^\top](\bw-\bw^0)+1,
\end{equation*} 
where $\bw^0$ is the true parameter vector.
Since $\text{Var}[\nabla L_n(\bw)]\equiv\bssigma^2(\bw)$ is positive definite, we have
\begin{equation}
\label{eqn:a1lst}
\begin{aligned}
\lim_{\|\bw\|\to+\infty}L(\bw)  & \geq \lim_{\|\bw\|\to+\infty} \lambda_{\min}\{\E[\bx_n\bx_n^\top]\}\|\bw-\bw^0\|^2 +1\\
&  \geq \lim_{\|\bw\|\to+\infty} \lambda_{\min}\{\E[\bx_n\bx_n^\top]\}[\|\bw\|^2/2-\|\bw^0\|^2/2] +1 = +\infty,
\end{aligned}
\end{equation}
where $\lambda_{\min}\{\cdot\}$ denotes the minimal eigenvalue.  Note that
\begin{equation*}
\begin{aligned}
\int e^{-L(\bw)} d\bw  & = \int \exp\left(-(\bw-\bw^0)^\top\E[\bx_n\bx_n^\top](\bw-\bw^0)-1\right)d\bw\\
&  \leq\int \exp\left(- \lambda_{\min}\{\E[\bx_n\bx_n^\top]\}[\|\bw\|^2/2-\|\bw^0\|^2/2]-1\right)d\bw < +\infty.
\end{aligned}
\end{equation*}
Hence, Assumption (A.1) holds. To prove (A.2),  note that 
\begin{equation*}
\|\nb L(\bw)\|^2/2 = 2 (\bw-\bw^0)^\top\{\E[\bx_n\bx_n^\top]\}^2(\bw-\bw^0), \quad \text{Tr}(\nb^2 L(\bw)) = \text{Tr}\{\E[\bx_n\bx_n^\top]\}.
\end{equation*}  
Similar to (\ref{eqn:a1lst}), we can prove  that 
\begin{equation*}
\lim_{\|\bw\|\to +\infty}  \left\{\|\nb L(\bw)\|^2/2 - \text{Tr}(\nb^2 L(\bw))\right\} = + \infty, \quad \lim_{\|\bw\|\to +\infty}  \left\{\text{Tr}(\nb^2 L(\bw))/\|\nb L(\bw)\|^2\right\} = 0.
\end{equation*} 
This finishes the proof for Assumption (A.2). Finally, (A.3) can be shown similarly by following the proof for (A.2) and we omit the details.

Next, we consider the mean cross entropy loss  with the $l_2$-penalty for the logistic regression. Without loss of generality, we consider the binary  classification: 
\begin{equation*}
L(\bw) = \E[-y_n\log\widehat{y}_n -(1-y_n)\log(1-\widehat{y}_n)]+\lambda\|\bw\|^2
\end{equation*} with $\widehat{y}_n = (1+e^{-\bw\cdot\bx_n})^{-1}$. Note that
\begin{equation*}
\lim_{\|\bw\|\to+\infty}L(\bw)  \geq \lambda\|\bw\|^2 = +\infty, \quad \int e^{-L(\bw)} d\bw   \leq \int e^{-\lambda\|\bw\|^2}d\bw<+\infty
\end{equation*}
which proves (A.1). For (A.2), since
\begin{equation*}
\nabla L(\bw) = \E[-\bx_ny_n + \bx_n/(1+e^{-\bw\cdot\bx_n})]+2\lambda\bw,
\end{equation*}
 and 
 \begin{equation*}
 \text{Tr}(\nb^2 L(\bw)) = \E\left[\frac{e^{-\bw\cdot \bx_n}}{(1+e^{-\bw\cdot\bx_n})^2}\text{Tr}(\bx_n\bx_n^\top)\right]+2\lambda d,
 \end{equation*} 
we have
  \begin{equation*}
 \|\nb L(\bw)\|^2/2 - \text{Tr}(\nb^2 L(\bw))\to\infty, \quad
  \text{Tr}(\nb^2L(\bw))/\|\nb L(\bw)\|^2\to0,   \text{ as } \|\bw\|\to\infty.
   \end{equation*} 
Similarly, Assumption  (A.3) can be verified as by following the proof for (A.2). 

\subsection{Proof of Lemma \ref{lem:stationbeta}}
\label{sec:proflemstationbeta}

By Assumption (A.1),  the density function $p_\infty(\bw)\equiv\kappa e^{-2M(\infty)L(\bw)/[\gamma(\infty)\beta]}$ is well-defined. Moreover, $p_\infty(\bw)$ satisfies 
 \begin{equation*}
 \nabla\cdot\left[\nabla \left(L(\bw)+\frac{\gamma(\infty)\beta}{2M(\infty)}\right)p_\infty(\bw) + \frac{\gamma(\infty)\beta}{2M(\infty)}\nabla p_\infty(\bw)\right]=0.
 \end{equation*} 
 Hence,  $p_\infty(\bw)$ is a stationary solution to Fokker-Planck equation  (\ref{eqn:probdensitypthetatw}) by letting $ \partial_tp(\bw,t) =0$.


\subsection{Proof of Theorem \ref{thm:sigm4.1proof}}
\label{sec:proofofthm:sigm4.1proof}

Parallel to the notation $p_\infty(\bw) =  \kappa \exp(- \frac{2M(\infty)L(\bw)}{\g(\infty)\b})$ in Lemma \ref{lem:stationbeta}, we define 
\begin{equation*}
\hp(\bw,t)\equiv \k(t) \exp\left(-\eta(t)L(\bw)\right),
\end{equation*} 
where 
\begin{equation}
\label{eqn:defofeta}
\eta(t) \equiv 2M(t)/[\gamma(t)\b],
\end{equation} and $\k(t)$  is a time-dependent normalization factor such that 
\begin{equation*}
\int\hp(\bw,t)d\bw=1.
\end{equation*} 
We can rewrite (\ref{eqn:probdensitypthetatw}) as
\begin{equation}
	\pt_tp = \frac{1}{\eta}\nb_\bw\cdot\(\hp \nb_\bw\(\frac{p}{\hp}\)\).
	\label{eqn: fp_iso}
\end{equation}
Let 
\begin{equation*}
\d(t,\bw)\equiv \frac{\kappa(t)}{\kappa}\exp\left(L(\bw)\({\eta(\infty)} - {\eta(t)}\)\right).
\end{equation*}
Then
\begin{equation*}
\hp(t,\bw)  = \pinf(\bw)\d(t,\bw).
\end{equation*}
Denote by $h(\bw,t)$ the scaled distance  between $p(\bw,t)$ and $\pinf(\bw)$:
\begin{equation*}
	h(\bw,t)\equiv\frac{p(\bw,t) - p_\infty(\bw)}{\sqrt{p_\infty(\bw)}},
\end{equation*}
which satisfies the following equation:
\begin{equation}
\label{eqn:pertsolu}
\begin{aligned}
	\pt_t h =& \frac{1}{\eta\sqrt{\pinf}}\nb_\bw\cdot\l[\hp \,\nb_\bw\(\frac{1}{\d} + \frac{h}{\sqrt{\pinf} \d} \)\r]\\
	=& \frac{1}{\eta\sqrt{\pinf}}\nb_\bw\cdot\l[\pinf \(\nb_\bw L\hd + \nb_\bw L\hd\(\frac{h}{\sqrt{\pinf}}\) +\nb_\bw\(\frac{h}{\sqrt{\pinf}}\) \)\r].
\end{aligned}
\end{equation}
Here, $\hd$ is defined as $\hd(t) =\eta(t) - \eta(\infty)$, where $\eta(\infty) = \lim_{t\to\infty}\eta(t)$.
We multiply $h$ to the both sides of (\ref{eqn:pertsolu}) and integrate them over $\bw$. Using the integration by parts, we can obtain 
\begin{equation}
\label{eqn:decparth2}
\begin{aligned}
	\frac{1}{2}\pt_t\ll h \rl^2 =& \frac{\hd}{\eta} \underbrace{\int \frac{h}{\sqrt{\pinf}}\nb_\bw\cdot\(\pinf \nb_\bw L\r)d\bw}_{I} + \frac{\hd}{\eta} \underbrace{\int\frac{1}{2}\ll\frac{h}{\sqrt{\pinf}}\rl^2 \nb_\bw\cdot\(\pinf\nb_\bw L\)d\bw}_{II}\\
	&- \frac{1}{\eta} \underbrace{\int \pinf\ll\nb_\bw\(\frac{h}{\sqrt{\pinf}}\)\rl^2d\bw}_{III}.
\end{aligned}
\end{equation}
We study the parts $I, II, III$ in the right-hand side of above equation separately. 

For the part $I$,  note that 
\begin{equation*}
	\nb_\bw\cdot\(\pinf\nb_\bw L\) = \pinf\(\nb_\bw\cdot\nb_\bw L  - \eta(\infty)\ll \nb_\bw L\rl^2 \).
\end{equation*}
Hence, Assumption (A.3) yields that
\begin{equation*}
	\lv\nb_\bw\cdot\(\pinf\nb_\bw L\)\rv \leq \pinf^{2/3}\max\{1 , \eta(\infty)\}M(\infty),
\end{equation*}
which implies that an upper bound of part $I$ in (\ref{eqn:decparth2}):
\begin{equation*}
	I \leq  \frac{\max\{1 , \eta(\infty)\}M(\infty)}{2}\(\ll h \rl^2 
	+ \int \pinf^{1/3}d\bw \).
\end{equation*}

For the part $II$, note that  Assumption (A.3) gives
\begin{equation*}
\lim_{\ll \bw \rl\to \infty} \frac{\nb_\bw\cdot\nb_\bw L}{2\eta(\infty)\ll \nb_\bw L\rl^2 } = 0,
\end{equation*} 
which together with the assumption (A.2) implies that 
\begin{equation*}
\lim_{\ll \bw \rl\to\infty} \ll \nb_\bw L\rl^2 \to +\infty.
\end{equation*}
 Thus, there exists a constant $R$, such that
\begin{equation*}
	 \nb_\bw\cdot\nb_\bw L  - 2\eta(\infty)\ll \nb_\bw L\rl^2 \leq \eta(\infty), \qd \eta(\infty)\ll \nb_\bw L\rl^2 \geq \eta(\infty),\qd  \text{for }\forall \ll \bw \rl > R. 
\end{equation*}
Hence, 
\begin{equation*}
	 \nb_\bw\cdot\nb_\bw L  - \eta(\infty)\ll \nb_\bw L\rl^2 \leq 0, \qd \text{for }\forall \ll \bw \rl > R.
\end{equation*}
By the continuity of $L(\bw)$, there exists a constant $C_2$ such that
\begin{equation*}
	 \lv\nb_\bw\cdot\nb_\bw L  - \eta(\infty)\ll \nb_\bw L\rl^2\rv \leq C_2, \qd \text{for }\forall \ll \bw \rl <R.
\end{equation*}
Therefore, we have the following upper bound for the part $II$ in (\ref{eqn:decparth2}):
\begin{equation*}
	\lv II \rv \leq \frac{C_2}{2}  \ll h \rl^2 . 
\end{equation*}
By combining the estimates for the parts $I$ and $II$, we have
\begin{equation*}
	I+II \leq  C_1 \ll h \rl^2 + C_1, \label{eqn:part2est}
\end{equation*}
where $C_1 = \frac{1}{2}\max\{1 , \eta(\infty)\}\max\l\{ \int \pinf^{1/3}d\bw, 1+C_2/2\r\}M(\infty)$.

For the part $III$, note that Assumption (A.2) implies  the following Poincar\'e inequality (see, e.g., \cite{pavliotis2014stochastic}), 
\begin{equation}
\label{eqn:poincareintw}
	  \int \ll\nb_\bw\(\frac{h}{\sqrt{\pinf}}\)\rl^2  \pinf \,d\bw  \geq C_P\int  \(\frac{h}{\sqrt{\pinf}} - \int h\sqrt{\pinf} d\bw \)^2 \pinf \,d\bw.
\end{equation}
We need to show that 
\begin{equation}
\label{eqn:inthsqrtpinf0}
\int h\sqrt{\pinf}\,d\bw = 0.
\end{equation} 
The (\ref{eqn:inthsqrtpinf0}) can be proven using  the conservation of mass. In particular,  if we integrate (\ref{eqn: fp_iso}) over $\bw$ and use the integration by parts, 
\begin{equation*}
	  \pt_t\(\int  p(\bw,t)\, d\bw\)  = 0, 
\end{equation*}
which implies $\int h\sqrt{\pinf} \,d\bw = \int p\, d\bw - \int \pinf \,d\bw = 0$. 
Combining (\ref{eqn:poincareintw}) with (\ref{eqn:inthsqrtpinf0}) gives a lower bound for the part $III$:
\begin{equation*}
	 III  \geq C_P\ll h \rl^2.\label{eqn:part1est}
\end{equation*}
Combining (\ref{eqn:part2est}) and (\ref{eqn:part1est}) gives
\begin{equation}
	  \frac{1}{2}\pt_t\ll h \rl^2 + \frac{C_P}{\eta} \ll h \rl^2 \leq \frac{C_1\hd}{\eta} \(\ll h \rl^2 +1\)
	  \label{eqn:energyest}
\end{equation}
Since $\eta(t) \to \eta(\infty) >0$ as $t\to\infty$, there exists some $T$ large enough and for $\forall t>T$, 
\begin{equation}
	  \hd = \lv \eta(t) - \eta(\infty) \rv \leq\min\l\{\frac{\eta(\infty)}{3}, \frac{C_P}{3C_1}\r\}.
	  \label{eqn:condonT}
\end{equation}
Plugging $\hd \leq C_P/3C_1$ into (\ref{eqn:energyest}), we have
\begin{equation}
	  \frac{1}{2}\pt_t\ll h \rl^2 + \frac{2C_P}{3\eta} \ll h \rl^2 \leq \frac{C_P}{3\eta}, \qd \text{for }\forall t>T .
	  \label{mid_1}
\end{equation}
Note that (\ref{eqn:condonT}) also implies that $2\eta(\infty)/3 \leq \eta(t) \leq 4\eta(\infty)/3$. Thus,
\begin{equation*}
	 \frac{2C_P}{3\eta}\geq\frac{C_P}{2\eta(\infty)}, \qd  \frac{C_P}{3\eta} \leq \frac{C_P}{2\eta(\infty)}.
\end{equation*}
Plugging back to (\ref{mid_1}), we arrive at 
\begin{equation*}
	  \frac{1}{2}\pt_t\ll h \rl^2 + \frac{C_P}{2\eta(\infty)} \ll h \rl^2 \leq \frac{C_P}{2\eta(\infty)}, \qd \text{for }\forall t>T .
\end{equation*}
Integrating the above equation from $T$ to $t>T$, we have
\begin{equation*}
	  \ll h(t) \rl^2 \leq \(\ll h(T) \rl^2 + \frac{C_P}{\eta(\infty)}(t-T)\) - \frac{C_P}{\eta(\infty)}\int_T^t  \ll h(s) \rl^2ds.
\end{equation*}
By Gronwall's Inequality, we finally get 
\begin{equation*}
	\ll h(t)\rl^2 \leq \(\frac{C_P}{\eta(\infty)}(t-T) + \ll h(T) \rl^2\) \exp\left(-\frac{C_P}{\eta(\infty)}(t-T)\right).
\end{equation*}
This completes the proof. 

\subsection{Quantification of $T$ in Theorem \ref{thm:sigm4.1proof}}
\label{sec:quantificT}

We quantify $T$ by giving a condition that a minimum $T$ should satisfy. From the proof in Section \ref{sec:proofofthm:sigm4.1proof}, it is clear that $T$ should be large enough such that for all $t>T$, 
\begin{equation*}
	  \lv \eta(t) - \eta(\infty) \rv \leq\min\l\{\frac{\eta(\infty)}{3}, \frac{C_P}{3C_1}\r\},
	  \end{equation*}
	  where $\eta(t)$ is defined in (\ref{eqn:defofeta}) and $\eta(\infty) = \lim_{t\to\infty}\eta(t)$, and 
	  \begin{equation*}
	  C_1 = \frac{M}{2}\max\{1 , \eta(\infty)\}\max\l\{ \int \pinf^{1/3}d\bw, 1+\frac{C_2}{2}\r\},
\end{equation*}
and $C_2>0$ is an upper bound for $\lv\nb_\bw\cdot\nb_\bw L  - \eta(\infty)\ll \nb_\bw L\rl^2\rv$ in the bounded domain $\{\ll \bw \rl <R\}$ such that
\begin{equation*}
	 \nb_\bw\cdot\nb_\bw L  - \eta(\infty)\ll \nb_\bw L\rl^2 \leq \left\{
	 \begin{aligned}
	 &0, \qd \text{for }\forall \ll \bw \rl > R,\\
	 &C_2, \qd \text{for }\forall \ll \bw \rl <R.
	 \end{aligned}
	 \right.
\end{equation*}


\subsection{Proof of Theorem \ref{thm:probmainresult4111}}
\label{eqn:proofofthfinal}

Denote by $P_\e(\check{\bw})  = \P(\ll \bW(\infty) - \check{\bw}\rl\leq \epsilon)$
 the probability that $\bW(\infty)$ is trapped in an $\e$-neighborhood of the minimum $\check{\bw}$. Recall the probability density function of $\bW(\infty)$ is $\pinf(\bw)$. Then
\begin{equation*}
\begin{aligned}
P_\e(\check{\bw}) & = \int_{\ll\bw - \check{\bw}\rl^2\leq\e^2}\kappa e^{-\eta(\infty) L(\bw)}d\bw \\
& = \int_{\ll\bw - \check{\bw}\rl^2\leq\e^2} \kappa \exp\left(-\eta(\infty)[L(\check{\bw})+(\bw-\check{\bw})'\nb^2L(\check{\bw})(\bw-\check{\bw})+o\{(\bw - \check{\bw})^2\}]\right)d\bw,
\end{aligned}
\end{equation*}
where  $\eta(t)$ is defined in (\ref{eqn:defofeta}) and $\eta(\infty) = \lim_{t\to\infty}\eta(t)$.
Since $\check{\bw}$ is a local minimum of $L(\bw)$,  $\nb^2L(\check{\bw})$ is positive definite. There exists an orthogonal matrix $O$ and diagonal matrix $\L$ such that $\nb^2L  = O' \L O$. For simplicity, we assume that $\nb^2L = \L = \text{diag}(\lam_{\min}, \cdots, \lam_d)$.  Then
\begin{equation*}
\begin{aligned}
&\lim_{\e\to0}P_\e(\check{\bw})\\
& = \lim_{\e\to0}\left[\kappa e^{-\eta(\infty) L(\check{\bw})} \int_{\ll \bw \rl^2\leq \e^2} \prod_{j=1}^d e^{-\eta(\infty) \lam_j w_j}    d\bw\right]      e^{\eta(\infty)\epsilon^2}\\
&= \lim_{\e\to0}\left[\kappa e^{-\eta(\infty) L(\check{\bw})}\prod_{j=1}^d\frac{1}{\sqrt{\eta(\infty)\lam_j}}\int_{-\epsilon\sqrt{\eta(\infty) \lam_j}}^{\epsilon\sqrt{\eta(\infty) \lam_j}}e^{-w^2}dw\right]e^{\eta(\infty)\epsilon^2}\\
&=\lim_{\e\to0}\l[\kappa \eta(\infty)^{-d/2} e^{-\eta(\infty) L(\check{\bw})}\prod_{j=1}^d\frac{1}{\sqrt{\lam_j}}\(\Phi\(\epsilon\sqrt{\eta(\infty) \lam_j}\) - \Phi\(-\epsilon\sqrt{\eta(\infty) \lam_j}\)\)\r]e^{\eta(\infty)\epsilon^2},
\end{aligned}
\end{equation*}
where $\Phi(\cdot)$ is the cumulative density function for standard normal distribution. The first equality is from the change of variable by writing $ \bw - \check{\bw}$ as $\bw$. The second equality  is from changing $\eta(\infty)\lam_j\bw_j$ to $\bw_j$. Using the approximation of the cumulative density function in P{\'o}lya \cite{polya1945remarks}, we can simplify the above equation as
\begin{equation*}
\begin{aligned}
\lim_{\e\to0}P_\e(\check{\bw})  =&\lim_{\e\to0}\l[ \frac{\kappa e^{-2\eta(\infty) L(\check{\bw})}}{\eta(\infty)^{d/2}}\prod_{j=1}^d\sqrt{\frac{1-e^{-\e^2\eta(\infty)\lam_j/\pi}}{\lam_j}}\r]e^{\eta(\infty)\epsilon^2} \\
=&\frac{\kappa e^{-2\eta(\infty) L(\check{\bw})}}{\eta(\infty)^{d/2}|\nb^2L(\check{\bw})|} \lim_{\e\to0}\l[e^{\eta(\infty)\epsilon^2}\prod_{j=1}^d\sqrt{1-e^{-\e^2\eta(\infty)\lam_j/\pi}}\r].
\end{aligned}
\end{equation*}
We complete the proof.


\subsection{Proof of Equation \ref{eqn:ratioofprob}}
\label{eqn:proofofeqnratio}
Denote by $\lambda_j^k$'s are eigenvalues of  the Hessian  $\nb^2L(\check{\bw}_k)$, $k=1,2$ and $j\geq 1$.
By Theorem \ref{thm:probmainresult4111} and $L(\check{\bw}_1)=L(\check{\bw}_2)$, we have that
\begin{equation*}
\begin{aligned}
&\lim_{\e\to0}\frac{\P(|\bW(\infty) - \check{\bw}_1|\leq \epsilon) }{\P(|\bW(\infty) - \check{\bw}_2|\leq \epsilon)}  
= \frac{\lv\nb^2L(\check{\bw}_2)\rv}{\lv\nb^2L(\check{\bw}_1)\rv}\sqrt{\lim_{\e\to0}\prod_{j=1}^d\frac{1-\exp\left(-\frac{\e^2\eta(\infty)\lam^1_j}{\pi}\right)}{1-\exp\left(-\frac{\e^2\eta(\infty)\lam^2_j}{\pi}\right)}}\\
=&\frac{\lv\nb^2L(\check{\bw}_2)\rv}{\lv\nb^2L(\check{\bw}_1)\rv}\sqrt{\lim_{\e\to0}\prod_{j=1}^d\frac
{\lam^1_j\exp\left(-\frac{\e^2\eta(\infty)\lam^2_j}{\pi}\right)}{\lam^2_j\exp\left(-\frac{\e^2\eta(\infty)\lam^2_j}{\pi}\right)}}
= \frac{\lv\nb^2L(\check{\bw}_2)\rv}{\lv\nb^2L(\check{\bw}_1)\rv}\sqrt{\prod_{j=1}^d\frac
{\lam^1_j}{\lam^2_j}} = \sqrt{ \frac{\lv\nb^2L(\check{\bw}_2)\rv}{\lv\nb^2L(\check{\bw}_1)\rv}},
\end{aligned}
\end{equation*}
where  $\eta(t)$ is defined in (\ref{eqn:defofeta}) and $\eta(\infty) = \lim_{t\to\infty}\eta(t)$.


\section{Proofs for Section \ref{sec: model}}
\subsection{Derivation of SDE for MSGD}
\label{sec: proof of msde}
For constant learning rate and batch size: $\gamma_k\equiv \gamma, M_k\equiv M$, we rewrite the MSGD as
\begin{equation*}
\begin{aligned}
	&\frac{\bz_{k+1}}{\sqrt{\g}} = \frac{\bz_k}{\sqrt{\g}} + \sqrt{\g}\(- \frac{1 - \xi}{\g} \bz_k - \nb L(\bw_k)\) + \sqrt{\g}\(\nb L(\bw_k) - \(\frac{1}{M}\sum_{n\in B_k}\nb L_n(\bw_k)\)\)\\
	&\bw_{k+1} = \bw_k + \frac{\bz_{k+1}}{\sqrt{\g}}\sqrt{\g}.
\end{aligned}
\end{equation*}
Let $\bv_k = \bz_k/\sqrt{\g}$. We have the approximation for MSGD
\begin{equation*}
\begin{aligned}
	&\bv_{k+1} - \bv_k =  - \frac{1 - \xi}{\sqrt{\g}} \bv_k \sqrt{\g} - \nb L(\bw_k)  \sqrt{\g} +  \frac{\g^{1/4}}{\sqrt{M}} \sqrt{\b}\nb^2B_t,\\
	&\bw_{k+1} - \bw_k = \bv_{k+1}\sqrt{\g},
\end{aligned}
\end{equation*}
where $\b(\bw)$ is the  covariance function defined in (\ref{eqn:isotconv}). Hence, MSGD is approximated as the Euler-Maruyama discretization for the following SDE, 
\begin{equation*}
\left\{
\begin{aligned}
&d\bV(t) = -\nb L(\bW(t)) dt - \frac{1 - \xi}{\sqrt{\g}} \bV(t) dt + \frac{\g^{1/4}}{\sqrt{M}}\sqrt{\b(\bW(t))} d\bB(t),\\
	&d \bW(t) = \bV(t)dt,
\end{aligned}
\right.
\end{equation*}
where  $\bv_k \approx \bV({k\sqrt{\g}})$, $\bw_k \approx \bW({k\sqrt{\g}})$.

\subsection{Proof of Lemma  \ref{lemma: pdf msgd} }
\label{sec: proof of lemmsgd}
We give a formal derivation, which is similar to the procedure in Pavliotis \cite{pavliotis2014stochastic}. 
Let $\phi(\cdot,\cdot)$ be any bivariate function in $C^\infty$  with a compact support. Using the It$\h{o}$'s formula, 
\begin{equation*}
\begin{aligned}
    & d\phi(\bW(t), \bV(t))   = \frac{\g^{1/4}}{\sqrt{M}}\sqrt{\b} \nb_\bw\cd\nb_\bw\phi d\bB(t)\\
    &\quad\quad +  \l(\bV(t)\cdot\nb_\bw\phi + \left(-\nb L(\bW(t)) -  \frac{1 - \xi}{\sqrt{\g}} \bV (t)\right)\cdot \nb_\bv \phi + \frac{\g^{1/2}}{2M}\b(\bW(t)) \nb_\bv \cdot\nb_\bv \phi\r)dt.   
\end{aligned}
\end{equation*}
By taking the expectation of the above equation and integrating it over the range $[t, t+h]$, we obtain that
\begin{equation*}
\begin{aligned}
    &\frac{1}{h}\E\l(\phi(\bW(t+h), \bV(t+h)) - \phi(\bW(t), \bV (t))\r) \\
    =& \frac{1}{h}\int_t^{t+h} \E \l(\bV (s)\cdot\nb_\bw\phi + \left(-\nb L(\bW(s)) -  \frac{1 - \xi}{\sqrt{\g}} \bV (s)\right) \cdot\nb_\bv \phi + \frac{\gamma^{1/2}\beta(\bW(s))}{2M} \nb_\bv \cdot\nb_\bv \phi\r) ds .    
\end{aligned}
\end{equation*}
Let $\p(\bw,\bv,t)$ be the joint probability density function of $(\bW(t), \bV(t))$.  The above equation can also be written as
\begin{equation*}
\begin{aligned}
    &\frac{1}{h}\int \phi(\bw, \bv ) \l( \p(\bw,\bv,t+h) - \p(\bw,\bv,t)\r)\,d\bw\,d\bv  \\
    =& \frac{1}{h}\int_t^{t+h} \int \l(\bv \cdot\nb_\bw\phi + \left(-\nb L(\bw) -  \frac{1 - \xi}{\sqrt{\g}} \bv \right) \cdot\nb_\bv \phi + \frac{\gamma^{1/2}\beta(\bw)}{2M} \nb_\bv \cdot\nb_\bv \phi \r) \p(\bw,\bv,s)\,d\bw\,d\bv \, ds.
\end{aligned}
\end{equation*}
Then, using the integration by parts and  letting $h\to0$ gives
\begin{equation*}
\begin{aligned}
 &   \int \phi(\bw, \bv) \pt_t\p\,d\bw\,d\bv \\
 &= \int \phi\l(-\bv \cdot\nb_\bw\p +\nb L(\bw)\cdot\nb_\bv \p +\nb_\bv \cdot \left( \frac{1 - \xi}{\sqrt{\g}} \bv \p\right) + \frac{\gamma^{1/2}\beta(\bw)}{2M}\nb_\bv \cdot\nb_\bv \p \r) \,d\bw\,d\bv ,
 \end{aligned}
\end{equation*}
which is satisfied for any test functions. Therefore, the density function $\p(\bw,\bv,t)$ satisfies
\begin{equation*}
    \pt_t\p + \bv \cdot\nb_\bw\p -\nb L(\bw)\cdot\nb_\bv \p =  \nb_\bv \cdot\( \frac{1 - \xi}{\sqrt{\g}} \bv \p + \frac{\gamma^{1/2}\beta(\bw)}{2M} \nb_\bv \p\),
\end{equation*}
which agrees with (\ref{eq: origin pde}).

Next, we can verify that $\psi_\infty(\bw,\bv ) $ is a stationary solution of the Vlasov-Fokker-Planck equation (\ref{eq: origin pde}) by a direct calculation as Appendix \ref{sec:proflemstationbeta}.


\subsection{Discussion on Assumption (A.4)}
\label{sec:discassumpa4}

We show that Assumption (A.4) holds for the squared loss and the  regularized  mean cross entropy loss. 
Denote by $\{(\bx_n,y_n),1\leq n\leq N\}$ the set of training data. 
Without loss of generality, let $\text{Var}[y_n|\bx_n] = 1$.
For the squared loss, 
\begin{equation*}
    \tL(\bw) = (\bw-\bw^0)^\top\E[\bx_n\bx_n^\top](\bw-\bw^0)+1 - \frac{1}{2}C_L^2\|\bw\|^2,
\end{equation*}
where $\bw^0$ is the true parameter vector. By a direct calculation,
\begin{equation*}
    \nb^2\tL(\bw) = 2\E[\bx_n\bx_n^\top] - C^2_L.
\end{equation*}
Since the eigenvalues of the design matrix $\E[\bx_n\bx_n^\top]$ are bounded, the eigenvalues of $ \nb^2\tL(\bw) $ are bounded for any $C_L$. Hence, Assumption (A.4) holds for the squared loss. 

Nest, we consider the regularized mean cross entropy loss for the logistic regression. Similar to Appendix \ref{sec:disonassump}, letting $C_L = \sqrt{2\lambda}$ yields that
\begin{equation*}
    \tL(\bw) = \E[-y_n\log\widehat{y}_n -(1-y_n)\log(1-\widehat{y}_n)].
\end{equation*}
The $(i,j)$th entry of the Hessian $\nb^2L(\bw)$ is
\begin{equation*}
(\nb^2L(\bw))_{ij} = \E\left[x_{ni}x_{nj}\frac{e^{-\bw\cdot\bx_n}}{(1+e^{-\bw\cdot\bx_n})^2}\right],
\end{equation*}
where $x_{ni}$ is the $i$th element of $\bx_n$.
Then, 
\begin{equation*}
(\nb^2L(\bw))_{ij} \to 0\quad \text{as } \|\bw\|\to\infty,
\end{equation*}
which implies that there exists finite constant $b_{ij}>0$ such that $\|(\nb^2L(\bw))_{ij} \|_{\infty}\leq b_{ij}$ and the largest row sum of the matrix $\{\|(\nb^2L)_{ij}\|_{\infty}\}_{1\leq i,j\leq d}$ is upper bounded by $b \equiv \max_i(\sum_{j}b_{ij})$. 
Since the largest eigenvalue of a non-negative matrix is upper bounded by its largest row sum, 
the eigenvalues of $\{\|(\nb^2L)_{ij}\|_{\infty}\}_{1\leq i,j\leq d}$ are bounded by $b$. Hence, Assumption (A.4) also holds for the regularized mean cross entropy loss.

\subsection{Proof of Theorem \ref{thm: main thm} }
\label{sec: proof of main thm}
Recall the function defined in Theorem \ref{thm: main thm}:
\begin{equation*}
    h(\bw, \bv,t) \equiv \frac{\psi(t,\bw,\bv ) - \psi_\infty(\bw,\bv )}{\ \psi_\infty(\bw,\bv )},
\end{equation*} 
which is the weighted fluctuation function around the stationary solution $\psi_\infty(\bw,\bv)$.
Then,  $h(\bw, \bv,t)$ satisfies the following partial differential equation,
\begin{equation}
\label{eq: VFP}
    \pt_th + \T h  = \L h,
\end{equation}
where 
\begin{equation*}
\begin{aligned}
    &\T = \bv \cdot\nb_\bw - \nb L(\bw)\cdot\nb_\bv\quad  \text{ is the transport operator};\\
    &\L = \frac{\gamma^{1/2}\beta}{2M}\frac1\M\nb_\bv \cd\l(\M\nb_\bv  \r)\quad\text{  is the Fokker Planck operator}.\\
 \end{aligned}
\end{equation*}
Also recall the norm $\ll \cdot \rl_*$ defined in Theorem \ref{thm: main thm}:
\begin{equation*}
\begin{aligned}
    \text{For any }h(\bw,\bv,t), g(\bw,\bv,t):\qd &\la h,g \ra_* = \int hg \M \,d\bw d\bv , \qd\ll h \rl^2_* = \int \lv h\rv^2 \M \,d\bw  d\bv ,
\end{aligned}
\end{equation*}
\begin{lemma}
\label{prop: T L}
One have the following properties for the operator $\T, \L$:
\begin{itemize}
\item[(1)] $\ds \la \T f, g \ra_\st = -\la f, \T g \ra_\st$,
\item[(2)] $\ds \la Tf, f \ra_\st = 0$,
\item[(3)] $\ds \la \L f, g \ra_\st = -\frac{\gamma^{1/2}\beta}{2M} \la \nb_\bv  f, \nb_\bv g \ra_\st$.
\end{itemize}
\end{lemma}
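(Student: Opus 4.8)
The plan is to prove all three identities by integration by parts against the Gibbs weight $\M = \psi_\infty$, exploiting the two elementary gradient relations
\begin{equation*}
\nb_\bw \M = -\frac{2M(1-\xi)}{\g\b}(\nb L)\,\M, \qd \nb_\bv \M = -\frac{2M(1-\xi)}{\g\b}\,\bv\,\M,
\end{equation*}
which follow by differentiating the explicit form of $\psi_\infty$ in Lemma \ref{lemma: pdf msgd}. Throughout I would assume $f,g$ are smooth and decay fast enough (together with $\M$) that all boundary terms in the integrations by parts vanish; this is guaranteed because $\M$ decays like a Gaussian in $\bv$ and like $e^{-cL(\bw)}$ in $\bw$, with $L$ confining by Assumption (A.1).

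For property (1), I would split $\la \T f, g\ra_\st = \int(\bv\cd\nb_\bw f)g\,\M\,d\bw\,d\bv - \int(\nb L\cd\nb_\bv f)g\,\M\,d\bw\,d\bv$ and integrate the first integral by parts in $\bw$ and the second in $\bv$. Using that $\bv$ is independent of $\bw$ and $\nb L$ is independent of $\bv$, each integration by parts produces a transport term ($-\int f(\bv\cd\nb_\bw g)\M$ and $+\int f(\nb L\cd\nb_\bv g)\M$, respectively) plus a correction coming from $\nb_\bw\M$ and $\nb_\bv\M$. The key observation is that the two corrections are exactly $+\frac{2M(1-\xi)}{\g\b}\int fg\,(\bv\cd\nb L)\,\M$ and $-\frac{2M(1-\xi)}{\g\b}\int fg\,(\nb L\cd\bv)\,\M$, which cancel because $\bv\cd\nb L = \nb L\cd\bv$. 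What remains is precisely $-\int f\,\T g\,\M\,d\bw\,d\bv = -\la f, \T g\ra_\st$, establishing the skew-adjointness of the transport operator.

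Property (2) is then immediate: setting $g=f$ in (1) gives $\la \T f, f\ra_\st = -\la f, \T f\ra_\st$, and since the inner product is symmetric this forces $\la \T f, f\ra_\st = 0$. For property (3), I would write $\la \L f, g\ra_\st = \frac{\g^{1/2}\b}{2M}\int \frac1\M\,\nb_\bv\cd(\M\nb_\bv f)\,g\,\M\,d\bw\,d\bv$; the factors $\frac1\M$ and $\M$ cancel, and a single integration by parts in $\bv$ moves the divergence onto $g$, yielding $-\frac{\g^{1/2}\b}{2M}\int(\nb_\bv f\cd\nb_\bv g)\,\M\,d\bw\,d\bv = -\frac{\g^{1/2}\b}{2M}\la\nb_\bv f,\nb_\bv g\ra_\st$.

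The only genuine subtlety is the cancellation in property (1): it is not a coincidence but reflects that $\psi_\infty$ is the invariant measure of the full dynamics, so the Hamiltonian transport part $\T$ is conservative with respect to it while the dissipation lives entirely in $\L$. The main technical point to be careful about is therefore justifying the vanishing of the boundary terms, which I would handle by working in the weighted space where $\ll\nb_\bv f\rl_\st$ and the relevant flux integrals are finite; once integrability is granted, the three identities are routine.
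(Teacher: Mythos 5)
Your proof is correct and is exactly the ``direct calculation'' the paper invokes but omits: integration by parts against the weight $\psi_\infty$, using $\nb_\bw\psi_\infty=-\frac{2M(1-\xi)}{\g\b}(\nb L)\psi_\infty$ and $\nb_\bv\psi_\infty=-\frac{2M(1-\xi)}{\g\b}\bv\,\psi_\infty$, with the two correction terms cancelling in property (1) and a single integration by parts in $\bv$ giving property (3). Your sketch supplies the details the paper leaves out, including the correct observation that skew-adjointness of $\T$ reflects invariance of $\psi_\infty$ under the transport part, so there is nothing to fix.
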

This lemma can be verified by direct calculations and we omit the details. These properties of operators $\L, \T$ will be frequently used later. 
\begin{lemma}
  \label{lemma: H ineq}
For the positive definite  matrix $P$ defined in (\ref{def: P}), the function $h(t,\bw ,\bv )$ satisfies
\begin{equation*}
  \begin{aligned}
    &\frac12\frac{d}{dt}H(t) +\frac12\int [ \nb_\bw h, \nb_\bv h] K [ \nb_\bw h, \nb_\bv h]^\top\M d\bw d\bv \\
    &\quad \leq  \la \nb^2\tL \nb_\bv h, \nb_\bw h \ra_\st +  \la \nb^2\tL \nb_\bv h, \nb_\bv h \ra_\st\\
  \end{aligned}
\end{equation*}
where the modified risk function  $\tL$ is defined in Assumption (A.4), and 
\begin{equation}
  \label{def of K}
  K \equiv \left[ 
    \begin{aligned} 
      & 2\h{C}I_d  &(C - \o^2+\g\h{C})I_d\\
      &(C - \o^2+\g\h{C})I_d &(2\g C - 2\o^2\h{C})I_d
    \end{aligned}\right].
\end{equation}
\end{lemma}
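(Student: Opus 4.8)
The plan is to establish the differential inequality by a direct energy computation in the spirit of the hypocoercivity method \cite{vallani2009}: I differentiate $H(t)$ in time, substitute the evolution equation $\pt_t h = \L h - \T h$ from (\ref{eq: VFP}), and reduce every resulting term using Lemma \ref{prop: T L} together with the commutators of the first-order operators $\nb_\bw,\nb_\bv$ against $\T$ and $\L$. The guiding point is that the diffusion $\L$ dissipates only $\nb_\bv$-derivatives (property (3) of Lemma \ref{prop: T L}), so the sole role of the coupling term $2\hC\la\nb_\bw h,\nb_\bv h\ra_\st$ in $H(t)$ is to manufacture dissipation in the missing $\nb_\bw$-direction; the matrix $K$ in (\ref{def of K}) is exactly the record of this transfer.

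First I would compute, straight from the definitions of $\T$ and $\L$, the commutators
\begin{align*}
[\nb_\bw, \T] &= -\nb^2 L\,\nb_\bv, & [\nb_\bv, \T] &= \nb_\bw,\\
[\nb_\bw, \L] &= 0, & [\nb_\bv, \L] &= -\tfrac{1-\xi}{\sqrt{\g}}\,\nb_\bv.
\end{align*}
Differentiating the three pieces of $H(t)$ and inserting $\pt_t(\nb_\bw h)=\nb_\bw(\L h-\T h)$ and $\pt_t(\nb_\bv h)=\nb_\bv(\L h-\T h)$, these identities let me commute $\nb_\bw,\nb_\bv$ past $\T$ and $\L$. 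By the antisymmetry $\la\T f,g\ra_\st=-\la f,\T g\ra_\st$ and $\la\T f,f\ra_\st=0$, all transport contributions to $\frac{d}{dt}\ll\nb_\bw h\rl_\st^2$ and $\frac{d}{dt}\ll\nb_\bv h\rl_\st^2$ vanish, and the two transport terms produced in $\frac{d}{dt}\la\nb_\bw h,\nb_\bv h\ra_\st$ cancel against each other. What the coupling leaves behind is the decisive $-\ll\nb_\bw h\rl_\st^2$ (from $[\nb_\bv,\T]=\nb_\bw$) and a Hessian term $\la\nb^2 L\,\nb_\bv h,\nb_\bv h\ra_\st$, while each occurrence of $\L$ becomes, via property (3), a nonpositive form $-\tfrac{\sqrt{\g}\b}{2M}\la\nb_\bv(\cdot),\nb_\bv(\cdot)\ra_\st$.

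Collecting the output, the second-order terms assemble into $-\tfrac{\sqrt{\g}\b}{2M}\int[\nb_\bv\nb_\bw h,\nb_\bv^2 h]^\top P[\nb_\bv\nb_\bw h,\nb_\bv^2 h]\,\M\,d\bw\,d\bv$, which is $\leq0$ because $P$ in (\ref{def: P}) is positive definite (equivalently $C>\hC^2$, guaranteed by the defining conditions on $C,\hC$); this term is simply discarded. For the first-order terms I split the Hessian by Assumption (A.4) as $\nb^2 L=\nb^2\tL+\o^2 I_d$. The $\o^2 I_d$ part, combined with the friction constant $\tfrac{1-\xi}{\sqrt{\g}}$ and the coupling constants $C,\hC$, assembles precisely into $\tfrac12\int[\nb_\bw h,\nb_\bv h]K[\nb_\bw h,\nb_\bv h]^\top\M\,d\bw\,d\bv$, which I move to the left-hand side, whereas the leftover $\nb^2\tL$ part yields the two error terms on the right-hand side. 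The integrations by parts create no boundary terms, since $\M$ is Gaussian in $\bv$ and decays in $\bw$ by the confinement Assumption (A.1).

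The main obstacle is the bookkeeping: each of the three time-derivatives of $H$ spawns several terms, and the identity depends on the exact cancellation of the transport pieces and on the constants $C,\hC$ recombining into the stated entries of $K$. I would control this by treating $\nb_\bw h$ and $\nb_\bv h$ as vector-valued unknowns, applying Lemma \ref{prop: T L} componentwise, and matching the coefficients of $\ll\nb_\bw h\rl_\st^2$, $\ll\nb_\bv h\rl_\st^2$, and $\la\nb_\bw h,\nb_\bv h\ra_\st$ entrywise against (\ref{def of K}); the positive definiteness of $P$, used to drop the second-order form, is the one structural fact I must confirm for the discard step to be valid.
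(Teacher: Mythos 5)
Your proposal is correct and is essentially the paper's own argument: the commutator identities you list, $[\nb_\bw,\T]=-\nb^2L\,\nb_\bv$, $[\nb_\bv,\T]=\nb_\bw$, $[\nb_\bw,\L]=0$, $[\nb_\bv,\L]=-\tfrac{1-\xi}{\sqrt{\g}}\nb_\bv$, are exactly what the paper encodes by taking $\nb_\bw$ and $\nb_\bv$ of the equation $\pt_t h+\T h=\L h$ and pairing with $\nb_\bw h$, $\nb_\bv h$ against $\M$, and your subsequent bookkeeping (cancellation of the pure transport terms by antisymmetry, the split $\nb^2L=\nb^2\tL+\o^2 I_d$ from Assumption (A.4) producing the entries of $K$ together with the two $\nb^2\tL$ error terms, and discarding the nonnegative second-order quadratic form in $[\nb_\bv\nb_\bw h,\nb_\bv\nb_\bv h]$ built from the positive definite matrix $P$) is precisely the paper's combination of its three energy identities with weights $1$, $C$, and $2\hC$. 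The only difference is presentational, namely commutator language in place of the paper's differentiate-then-integrate formulation, so there is no gap.
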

\begin{proof}
Taking the gradient $\nb_\bw $ to (\ref{eq: VFP}) and multiplying it by $\nb_\bw  h \M$ gives, 
\begin{equation*}
    \frac12\pt_t\ll \nb_\bw  h \rl^2_\st - \la T\nb_\bw  h , \nb_\bw  h \ra_\st - \la \nb^2L \nb_\bv h, \nb_\bw  h \ra_\st = \la \L \nb_\bw  h, \nb_\bw h \ra_\st
\end{equation*}
Them, applying Lemma \ref{prop: T L} yields,
\begin{equation*}
    \frac12\pt_t\ll \nb_\bw  h \rl^2_\st - \la \nb^2L \nb_\bv h, \nb_\bw  h \ra_\st = -\frac{\gamma^{1/2}\beta}{2M}\sum_{i = 1}^d \ll \pt_{v_i}\nb_\bw  h \rl^2_\st.
\end{equation*}
By Assumption (A.4), we have
\begin{equation}
\label{temp_1}
    \frac12\pt_t\ll \nb_\bw  h \rl^2_\st - \o^2\la \nb_\bv h, \nb_\bw  h \ra_\st = -\frac{\gamma^{1/2}\beta}{2M}\sum_{i = 1}^d \ll \pt_{v_i}\nb_\bw  h \rl^2_\st + \la \nb^2\tL \nb_\bv h, \nb_\bw  h \ra_\st.
\end{equation}
Similarly, taking the gradient $\nb_\bv $ to (\ref{eq: VFP}), multiplying it by $\nb_\bv  h \M$ and applying Lemma \ref{prop: T L} gives, 
\begin{equation}
\label{temp_2}
    \frac12\pt_t\ll \nb_\bv  h \rl^2_\st + \la \nb_\bw h, \nb_\bv  h \ra_\st = -\frac{\gamma^{1/2}\beta}{2M}\sum_{i = 1}^d \ll \pt_{v_i}\nb_\bv  h \rl^2_\st -  \frac{1 - \xi}{\sqrt{\g}}\ll \nb_\bv h \rl^2_\st.
\end{equation}
Taking the gradient $\nb_\bv $ to (\ref{eq: VFP}) and multiply it by $\nb_\bw  h \M$, then taking the gradient $\nb_\bw $ to (\ref{eq: VFP}) and multiply it by $\nb_\bv  h \M$, and combine the results gives,
\begin{equation}
\label{temp_3}
\begin{aligned}
    &\pt_t\la \nb_\bw h, \nb_\bv  h \ra_\st -\o^2 \la \nb_\bw h, \nb_\bv  h \ra_\st + \ll \nb_\bw h \rl^2_\st \\
    =& -\frac{\gamma^{1/4}\sqrt{\beta}}{\sqrt{M}}\sum_{i = 1}^d \la \pt_{v_i}\nb_\bv  h , \pt_{v_i}\nb_\bw  h \ra_\st -  \frac{1 - \xi}{\sqrt{\g}}\la \nb_\bv h, \nb_\bw h \ra_\st + \la \nb^2\tL\nb_\bv h, \nb_\bv  h \ra_\st.
\end{aligned}
\end{equation}
Finally, \eqref{temp_1} $+$ $C\cdot $\eqref{temp_2} $+$ 2$\hC\cdot$ \eqref{temp_1} yields,
\begin{equation}
\label{eqn:12partht}
\begin{aligned}
    &\frac12\pt_tH(t) + \frac{\gamma^{1/2}\beta}{2M}\sum_{i = 1}^d \int [ \pt_{v_i}\nb_\bw  h, \pt_{v_i}\nb_\bv  h]^\top P   [ \pt_{v_i}\nb_\bw  h, \pt_{v_i}\nb_\bv  h] d\bw d\bv  \\
    &+ \frac12\int [\nb_\bw h, \nb_\bv  h]^\top K [\nb_\bw h, \nb_\bv  h] d\bw d\bv   = \la \nb^2\tL\nb_\bw h, \nb_\bv  h \ra_\st + \la \nb^2\tL\nb_\bv h, \nb_\bv  h \ra_\st,
\end{aligned}
\end{equation}
where function $H(t)$ and the positive definite matrix $P$ are  defined in (\ref{def: P}). The positive definite property of $P$ implies that
\begin{equation*}
\frac{\gamma^{1/2}\beta}{2M}\sum_{i = 1}^d \int [ \pt_{v_i}\nb_\bw  h, \pt_{v_i}\nb_\bv  h]^\top P   [ \pt_{v_i}\nb_\bw  h, \pt_{v_i}\nb_\bv  h] d\bw d\bv\geq 0,
\end{equation*}
which together with (\ref{eqn:12partht}) complete the proof.
\end{proof}

\begin{lemma}\label{lemma: KP}
For $P,K$ defined in (\ref{def: P}) and (\ref{def of K}), respectively, there exists $\mu$, $C$, and $\hC$ such that
\begin{equation*}
  K \geq 2 \mu P\geq 0,
\end{equation*}
where value of $\mu$, $C$,  $\h{C}$ can be quantifies as follows:
\begin{equation*}
    \l\{ \begin{aligned}
      &\text{when } \frac{1 - \xi}{\sqrt{\g}} < 2\o: \mu \equiv  \frac{1 - \xi}{\sqrt{\g}}, \ C \equiv \o^2,\ \hC \equiv  \frac{1 - \xi}{2\sqrt{\g}};\\
      &\text{when } \frac{1 - \xi}{\sqrt{\g}} \geq 2\o: \mu \equiv  \frac{1 - \xi}{\sqrt{\g}} - \sqrt{ \frac{(1 - \xi)^2}{\g} - 4\o^2}, \ C \equiv  \frac{(1 - \xi)^2}{2\g}-\o^2, \ \hC \equiv  \frac{1 - \xi}{2\sqrt{\g}}.
    \end{aligned}
    \r.
  \end{equation*} \end{lemma}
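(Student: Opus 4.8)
The plan is to collapse the $2d\times 2d$ inequality to a genuinely two-dimensional one and then settle it by Sylvester's criterion. The key structural observation is that every block of $P$ and of $K$ is a scalar multiple of $I_d$, so both are Kronecker products,
\begin{equation*}
P = \tilde P\otimes I_d,\qquad K = \tilde K\otimes I_d,\qquad
\tilde P = \begin{pmatrix}1 & \hC\\ \hC & C\end{pmatrix},\quad
\tilde K = \begin{pmatrix}2\hC & C-\o^2+\g\hC\\ C-\o^2+\g\hC & 2\g C-2\o^2\hC\end{pmatrix}.
\end{equation*}
Then $K-2\mu P=(\tilde K-2\mu\tilde P)\otimes I_d$ carries exactly the eigenvalues of the $2\times2$ matrix $\tilde K-2\mu\tilde P$, each repeated $d$ times, so the two matrix inequalities $K\geq 2\mu P\geq 0$ are equivalent to $\tilde K\geq 2\mu\tilde P\geq 0$. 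This reduces everything to positive semidefiniteness of two symmetric $2\times2$ matrices.

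For $2\mu\tilde P\geq0$ I would check $\tilde P\geq0$ by Sylvester's criterion: its top-left entry is $1>0$ and $\det\tilde P=C-\hC^2$, so it suffices that $C\geq\hC^2$, which I verify from the explicit values of $C$ and $\hC$; since $\mu\geq0$ in both regimes this gives $2\mu\tilde P\geq0$. The substance is the inequality $\tilde K\geq2\mu\tilde P$. Because $\tilde P$ is positive definite, this is a generalized-eigenvalue problem for the pencil $(\tilde K,\tilde P)$: the matrix $\tilde K-2\mu\tilde P$ is positive semidefinite exactly when $2\mu$ does not exceed the smaller root of the quadratic $\det(\tilde K-\lambda\tilde P)=0$. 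I would expand this determinant, identify its smaller root, and take $\mu$ to be half of it; then $\det(\tilde K-2\mu\tilde P)=0$ by design, and checking $\mathrm{tr}(\tilde K-2\mu\tilde P)\geq0$ upgrades this to full positive semidefiniteness of the $2\times2$ matrix.

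The two cases arise from the discriminant of that quadratic, which reduces to the sign of $(1-\xi)^2/\g-4\o^2$, i.e.\ the under- versus over-damped dichotomy of $\tfrac{1-\xi}{\sqrt\g}$ against $2\o$. Writing $a=\tfrac{1-\xi}{\sqrt\g}$, the claim is that in the under-damped range ($a<2\o$) the optimal weights are $C=\o^2$, $\hC=a/2$, $\mu=a$, while in the over-damped range ($a\geq2\o$) they are $C=\tfrac{a^2}{2}-\o^2$, $\hC=a/2$, $\mu=a-\sqrt{a^2-4\o^2}$. I expect the main obstacle to be the constrained selection of the Lyapunov weights $C,\hC$: they must be chosen to push the decay rate $\mu$ as large as possible while keeping $\tilde P$ positive definite (the constraint $C\geq\hC^2$), and the two regimes are precisely where the $P$-positivity constraint and the frequency constraint trade off. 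The delicate computational step is to confirm that the stated triples $(C,\hC,\mu)$ simultaneously make the smaller generalized eigenvalue equal $2\mu$, so that $\det(\tilde K-2\mu\tilde P)=0$ and $\mathrm{tr}(\tilde K-2\mu\tilde P)\geq0$, while still satisfying $C\geq\hC^2$, in each case.
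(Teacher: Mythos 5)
Your strategy --- collapsing to the $2\times2$ pencil $(\tilde K,\tilde P)$ via the Kronecker structure, then comparing $2\mu$ against the smaller root of $\det(\tilde K-\lambda\tilde P)=0$ --- is sound, and it is exactly the ``direct calculation'' the paper omits (its entire proof is the sentence that the lemma ``can be verified by direct calculations''). But your proposal postpones the one step that is the whole content of the lemma, and the outcome you predict for that step is not what the computation gives. There are two separate problems. First, you transcribed $\tilde K$ reading the symbol $\g$ in (\ref{def of K}) literally as the learning rate. Rederiving $K$ from the combination of (\ref{temp_1})--(\ref{temp_3}) in Lemma \ref{lemma: H ineq} shows that every $\g$ in (\ref{def of K}) must stand for the friction coefficient $(1-\xi)/\sqrt{\g}$: the damping terms $-\frac{1-\xi}{\sqrt{\g}}\ll\nb_\bv h\rl_\st^2$ and $-\frac{1-\xi}{\sqrt{\g}}\la\nb_\bv h,\nb_\bw h\ra_\st$ are what generate the entries $2\g C-2\o^2\hC$ and $C-\o^2+\g\hC$. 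Under your literal reading even the weaker inequality $K\geq\mu P$ fails: writing $a=(1-\xi)/\sqrt{\g}$, in the regime $a<2\o$ the matrix $\tilde K-\mu\tilde P$ has $(1,1)$ entry $0$ but off-diagonal entry $\frac{a}{2}(\g-a)$, which is incompatible with positive semidefiniteness unless $\g=a$.

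Second, and more fundamentally, with the correct reading of $\g$ the verification you plan to ``confirm'' cannot succeed, because the factor $2$ in the statement is itself wrong. In the underdamped case $a<2\o$, substituting $C=\o^2$, $\hC=a/2$, $\mu=a$ gives exactly
\begin{equation*}
\tilde K=\begin{pmatrix} a & a^2/2\\[2pt] a^2/2 & a\o^2\end{pmatrix}=\mu\tilde P ,
\end{equation*}
so $\tilde K-2\mu\tilde P=-\mu\tilde P$ is negative definite; the smaller generalized eigenvalue of the pencil is $\mu$, not $2\mu$, and no trace check can repair this. In the overdamped case, with $s=\sqrt{a^2-4\o^2}$,
\begin{equation*}
\tilde K-\mu\tilde P=\begin{pmatrix} s & \tfrac{s(a+s)}{2}\\[2pt] \tfrac{s(a+s)}{2} & s\bigl(\tfrac{s(a+s)}{2}+\o^2\bigr)\end{pmatrix},
\qquad
\det\bigl(\tilde K-\mu\tilde P\bigr)=s^2\Bigl(\o^2+\tfrac{(a+s)(s-a)}{4}\Bigr)=0,
\end{equation*}
so again the smaller generalized eigenvalue is exactly $\mu$. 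Hence what is provable is $K\geq\mu P\geq0$ (with equality in the first regime), and $K\geq2\mu P$ is false for the stated triples --- indeed for any admissible $(C,\hC)$, since for the quadratic potential (where $b=0$, $\kk=0$) a bound $H(t)\leq e^{-2\mu t}H(0)$ would beat the exact relaxation rate $\mu$ of $H$ along the slowest mode of the kinetic Fokker--Planck dynamics. The corrected lemma propagates into Theorem \ref{thm: main thm}: the Gronwall step there yields decay $e^{-(\mu-2\kk)t}$, not the claimed $e^{-2(\mu-\kk)t}$. So your reduction and your pencil criterion are the right tools, but executed honestly they disprove the statement as written; an attempt that reaches $\det(\tilde K-2\mu\tilde P)=0$ ``by design'' must contain an algebra error.
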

This lemma can be verified by direct calculations and we omit the details.  We now go back to the proof of Theorem \ref{thm: main thm}.

\paragraph{Proof of Theorem \ref{thm: main thm}.}

By Lemmas \ref{lemma: H ineq}, \ref{lemma: KP}, and Assumption (A.4), we obtain
\begin{equation*}
  \begin{aligned}
    &\frac12\frac{d}{dt}H(t) + \mu H(t) \leq  \frac{1+\sqrt2}2b(\ll \nb_\bw h \rl^2_\st + \ll \nb_\bv  h \rl^2_\st)\\
  \end{aligned}
\end{equation*}
Let $\lam_{\min}$ be the smallest eigenvalue of the positive definite matrix $P$, we have
\begin{equation}
\label{eq: H H1}
  \begin{aligned}
    &\lam_{\min} (\ll \nb_\bw h \rl^2_\st + \ll \nb_\bv  h \rl^2_\st) \leq H(t), 
  \end{aligned}
\end{equation}
which implies,
\begin{equation*}
  \begin{aligned}
    &\frac12\frac{d}{dt}H(t) +( \mu - \kk) H(t) \leq 0,\\
  \end{aligned}
\end{equation*}
where $\ds \kk =  \frac{1+\sqrt2}2\frac{b}{\lam_{\min}} $. Solving the above inequality yields,
\begin{equation*}
  \begin{aligned}
    &H(t)\leq e^{-2(\mu - \kk)t} H(0).\\
  \end{aligned}
\end{equation*}
Inserting this inequality to (\ref{eq: H H1}) gives,
\begin{equation}
\label{eq: temp}
  \begin{aligned}
    &\ll \nb_\bw h \rl^2_\st + \ll \nb_\bv  h \rl^2_\st \leq \frac{1}{\lam_{\min}}e^{-2(\mu - \kk)t} H(0).\\
  \end{aligned}
\end{equation}
Besides, the Poincar\'e inequality w.r.t. the measure $\psi_\infty(\bw,\bv)$ is,
\begin{equation*}
  \begin{aligned}
    &\ll \nb_\bw h \rl^2_\st + \ll \nb_\bv  h \rl^2_\st \geq \frac{2M(1 - \xi)}{\gamma\beta}\min\{ C_P, d\} \ll h \rl^2_\st.   \end{aligned}
\end{equation*}
Inserting it back to (\ref{eq: temp}) leads to, 
\begin{equation*}
  \begin{aligned}
    &\ll h \rl^2_\st \leq \frac{\gamma\beta}{2M(1 - \xi)\min\{ C_P, d\}} \frac{1}{\lam_{\min}}e^{-2(\mu - \kk)t} H(0) \\
  \end{aligned}
\end{equation*}

\section{Networks and Dataset Used in Section \ref{sec:escapingsim}}
\label{sec:netdataexp}



The N1 network is  a \emph{shallow convolutional} network, which is  a modified AlexNet configuration (Krizhevsky et al., \cite{Krizhevsky2012imagenet}). 
Let $n\times[a,b,c,d]$ deonte a stack of $n$ convolution layers of $a$ filters and a Kernel size of $b\times c$
with stride length of $d$. Then, N1 network uses $2$ sets of $[65,5,5,2]$–MaxPool($3$) and
$2$ dense layers of sizes $(384,192)$ and finally, an output layer of size $10$. We use ReLU activations.

 The N2 network is a \emph{deep convolutional}  network, which is a modified  VGG configuration (Simonyan and Zisserman \cite{Simonyan2015}). The N2 network uses the configuration: $2\times[64, 3, 3, 1]$, $2\times [128, 3, 3, 1]$, $3\times [256, 3, 3, 1]$, $3\times [512, 3, 3, 1]$, $3\times [512, 3, 3, 1]$ and a MaxPool($2$) after each stack. This stack is followed by a  $512$-dimensional dense layer and finally, a $10$-dimensional output layer. We use ReLU activations.

The MNIST dataset (LeCun et al. \cite{LeCun1998})  contains $60,000$ training images and $10,000$ testing images, where each image is black and white and normalized to fit into a $28\times 28$ pixel bounding box and it belongs to one of total $10$ classes of handwritten digits (i.e., $0,1,2,\ldots,10$). 

The CIFAR-10 dataset consists of $50,000$ training data and $10,000$ testing data, where each data is a color image with $32\times 32$ features and it belongs to one of total $10$ classes representing airplanes, cars, birds, cats, deer, dogs, frogs, horses, ships, and trucks.


%
%

\bibliographystyle{spbasic}      

\begin{thebibliography}{}
%
%


\bibitem{amodei2016deep}
Amodei, D., Ananthanarayanan, S., Anubhai, R., Bai, J., Battenberg, E., Case, C., Casper, J., Catanzaro, B., Cheng, Q., Chen, G., \& Chen, J.  (2016).
Deep speech 2: End-to-end speech recognition in english and mandarin.
 {\it International Conference on Machine Learning (ICML)}, 173--182.

\bibitem{an2019stochastic}
An, J., Lu, J., \& Ying, L.  (2019).
Stochastic modified equations for the asynchronous stochastic gradient descent.
 {\it Information and Inference}, iaz030.


\bibitem{berglund2013kramers}
Berglund, N.  (2013).
Kramers' law: Validity, derivations and generalisations.
 {\it Markov Processes Relat. Fields}, {\bf 19}(3), 459--490.

\bibitem{bottou2018}
Bottou, L., Curtis, F. E., \& Nocedal, J.  (2018).
 Optimization Methods for Large-Scale Machine Learning.
{\it SIAM Review}, {\bf 60}(2), 223--311.


\bibitem{bovier2004metastability}
Bovier, A., Eckhoff, M., Gayrard, V., \& Klein, M.  (2004).
 Metastability in reversible diffusion processes I: Sharp asymptotics for capacities and exit times.
{\it Journal of the European Mathematical Society}, {\bf 6}(4), 399--424.

\bibitem{bovier2005metastability}
Bovier, A., Gayrard, V.,   \& Klein, M. (2004).
Metastability in reversible diffusion processes II: Precise asymptotics for small eigenvalues.
{\it Journal of the European Mathematical Society}, {\bf 7}(1), 69--99.


\bibitem{Chaudhari2017deep}
Chaudhari, P., Oberman, A.,  Osher, S., Soatto, S.,  \&  Carlier, G.  (2017).
Deep Relaxation: partial differential equations for optimizing deep neural networks.
 {\it International Conference on Learning Representations (ICLR)}. 

\bibitem{chaudhari2017stochastic}
Chaudhari, P.,  \& Soatto, S.  (2018).
Stochastic gradient descent performs variational inference, converges to limit cycles for deep networks
 {\it International Conference on Learning Representations (ICLR)}. 







\bibitem{Dinh}
Dinh, L., Pascanu, R., Bengio, S.,  \& Bengio, Y.  (2017).
Sharp minima can generalize for deep nets.
{\it International Conference on Machine Learning (ICML)}.



 \bibitem{evans2010}
Evans, L. C.  (2010).
\emph{Partial Differential Equations}. Volume 19. American Mathematical Society. 

\bibitem{goyal2017}
Goyal, P., Dollar, P., Girshick, R., Noordhuis, P., Wesolowski, L., Kyrola, A.,  Tulloch, A., Jia, Y., \&  He, K. (2017).
Accurate, large minibatch SGD: Training ImageNet in 1 hour. 
{\it arXiv Preprint. arXiv:1706.02677}.

\bibitem{he2016}
He, K., Zhang, X., Ren, S., \&  Sun, J. (2016).
Deep residual learning for image recognition. 
{\it Proceedings of the IEEE Conference on Computer Vision and Pattern Recognition (CVPR)}, 770--778.

\bibitem{heskes1993}
Heskes, T. M., \& Kappen, B. (1993).
On-Line Learning Processes in Artificial Neural Networks.
{\it Math. Foundations of Neural Networks, Elsevier, Amsterdam}, 199--233.






\bibitem{hoffer}
Hoffer, E., Hubara, I., \& Soudry, D. (2017).
Train longer, generalize better: closing the generalization gap in  large batch training of neural networks.
{\it Advances in Neural Information Processing Systems (NIPS)}, 1729--1739.

\bibitem{Ioffe2015batch}
Ioffe, S.,  \& Szegedy, C. (2015).
Batch normalization: Accelerating deep network training by reducing
internal covariate shift.
{\it  International Conference on Machine Learning (ICML)}.




\bibitem{Jastrzebski2017}
Jastrzebski, S., Kenton, Z., Arpit, D., Ballas, N., Fischer, A., Bengio, Y., \& Storkey, A.  (2017).
Three factors influencing minima in SGD.
 {\it arXiv preprint arXiv:1711.04623}.


\bibitem{keskar}
Keskar, N.S., Mudigere, D., Nocedal, J., Smelyanskiy, M.,  \& Tang, P. T. P.  (2017).
On large-batch training for deep learning: Generalization gap and sharp minima.
{\it International Conference on Learning Representations (ICLR)}.

\bibitem{kolpas2007coarse}
Kolpas, A., Moehlis, J.,  \& Kevrekidis, I. G. (2007).
 Coarse-grained analysis of stochasticity-induced switching between collective motion states. 
 {\it Proceedings of the National Academy of Sciences},  {\bf 104}(14), 5931--5935.

\bibitem{Krizhevsky2012imagenet}
Krizhevsky, A., Sutskever, I., \& Hinton, G. E. (2012).
Imagenet classification with deep convolutional neural networks.
 {\it Advances in neural information processing systems (NIPS)}, 1091--1105.


\bibitem{LeCun1998}
LeCun, Y., Cortes, C., \&  Christopher, J. C.  (1998).
The MNIST dataset of handwritten digit.
 {\it URL \url{http://yann.lecun.com/exdb/mnist}}.


\bibitem{li2017}
Li, Q., Tai, C., \& E, W.   (2017).
Stochastic modified equations and adaptive stochastic gradient algorithms.
 {\it International Conference on Machine Learning (ICML)}.
 
 \bibitem{mandt2017}
Mandt, S., Hoffman, M. D.,  \& Blei, D. M.  (2017).
Stochastic gradient descent as approximate bayesian inference.
{\it Journal of Machine Learning Research}  {\bf 18}, 1--35.
 
  \bibitem{nesterov2013}
Nesterov, Y.  (2013).
\emph{Introductory Lectures on Convex Optimization: A Basic Course}, Volume 87.
  Springer Science \& Business Media.
 
 \bibitem{pavliotis2014stochastic}
Pavliotis, G. A.  (2014).
\emph{Stochastic Processes and Applications: Diffusion Processes, the Fokker-Planck and Langevin Equations}.
  Springer.
  
 
 \bibitem{polya1945remarks}
P{\'o}lya, G. \ (1945)
Remarks on computing the probability integral in one and two dimensions.
{\it Proceedings of the 1st Berkeley Symposium on Mathematical Statistics and Probability}
    
   \bibitem{qian1999momentum}
Qian, N. (1999).
On the momentum term in gradient descent learning algorithm.
{\it Neural Networks}  {\bf 12}(1), 145--151.

  
 
  \bibitem{Simonyan2015}
Simonyan, K.,  \& Zisserman, A. (2015).
Very deep convolutional networks for large-scale image recognition.
{\it  International Conference on Learning Representations (ICLR)}.


 \bibitem{Smith2018}
Smith, S. L., \& Le, Q. V. (2018).
A bayesian perspective on generalization and stochastic gradient descent. 
{\it  International Conference on Learning Representations (ICLR)}.

 \bibitem{Sutskever2013}
Sutskever, I., Martens, J., Dahl, G., \& Hinton, G. (2013).
On the importance of initialization and momentum in deep learning.
{\it  International Conference on Machine Learning (ICML)}, 1139--1147.








  

  
\bibitem{vallani2009}
 Villani, C. (2009).
\emph{Hypocoercivity}.
 Memoirs of the American Mathematical Society, 202(950).
 
\bibitem{wu2017}
Wu, L.,    Zhu, Z., \& E, W.  (2017).
Towards Understanding Generalization of Deep Learning: Perspective of Loss Landscapes.
 {\it International Conference on Machine Learning (ICML) Workshop on Principled Approaches
to Deep Learning}.
 
  






\end{thebibliography}


\end{document}